\PassOptionsToPackage{table}{xcolor}

\documentclass[sigconf,nonacm]{acmart}

\AtBeginDocument{%
}

\renewcommand\footnotetextcopyrightpermission[1]{}
\usepackage{amsmath}
\usepackage{amsfonts}
\usepackage{amsthm}
\usepackage{dirtytalk}
\usepackage{booktabs}
\usepackage[normalem]{ulem}
\usepackage{tabularx}
\usepackage{enumitem}
\usepackage{xspace}

\usepackage{algorithm}
\usepackage{algpseudocode}

\usepackage{graphicx}
\usepackage{pifont}

\graphicspath{{Figure/}{figures/}{./}}

\newtheorem{theorem}{Theorem}
\newtheorem{assumption}{Assumption}
\newtheorem{proposition}{Proposition}
\newtheorem{lemma}{Lemma}
\newtheorem{corollary}[theorem]{Corollary}

\newcommand{\methodname}{\textsc{HP-JEPA}\xspace}

\newcommand{\cmark}{\ding{51}}
\newcommand{\xmark}{\ding{55}}
\newcommand{\pmark}{\ensuremath{\triangle}}

\algrenewcommand\algorithmicrequire{\textbf{Input:}}
\algrenewcommand\algorithmicensure{\textbf{Output:}}

\newcommand{\safeincludegraphics}[3][]{%
  \IfFileExists{#2}{%
    \includegraphics[#1]{#2}%
  }{%
    \fbox{%
      \begin{minipage}{#3}
        \centering
        \vspace{0.25cm}

        Missing figure file:
        \texttt{\detokenize{#2}}.

        \vspace{0.25cm}
      \end{minipage}%
    }%
  }%
}

\newcommand{\lin}[1]{}
\newcommand{\RX}[1]{}
\newcommand{\wg}[1]{}
\newcommand{\JX}[1]{}

\begin{document}

\title{%
  HP-JEPA: Hierarchical Partitioning for Multi-Resolution Graph
  Joint-Embedding Predictive Learning%
}


\author{Ruichen Xu}
\affiliation{%
  \department{Department of Applied Mathematics and Statistics}
  \institution{Stony Brook University}
  \city{Stony Brook}
  \state{NY}
  \country{USA}
}

\author{Jingxiang Qu}
\affiliation{%
  \department{Department of Computer Science}
  \institution{Stony Brook University}
  \city{Stony Brook}
  \state{NY}
  \country{USA}
}

\author{Wenhan Gao}
\affiliation{%
  \department{Department of Applied Mathematics and Statistics}
  \institution{Stony Brook University}
  \city{Stony Brook}
  \state{NY}
  \country{USA}
}

\author{Jiaxing Zhang}
\authornote{The contributions of Jiaxing Zhang were made independently and do not represent the views of the employer, TikTok.}
\authornote{Co-corresponding authors: Jiaxing Zhang
(\texttt{tabzhangjx@gmail.com}) and Yuefan Deng
(\texttt{yuefan.deng@stonybrook.edu}).}
\affiliation{%
  \institution{TikTok}
  \city{Bellevue}
  \state{WA}
  \country{USA}
}

\author{Linsey Pang}
\affiliation{%
  \institution{PayPal}
  \city{San Jose}
  \state{CA}
  \country{USA}
}

\author{Ravid Shwartz-Ziv}
\affiliation{%
  \department{Center for Data Science}
  \institution{New York University}
  \city{New York}
  \state{NY}
  \country{USA}
}

\author{Yann LeCun}
\affiliation{%
  \department{Center for Data Science}
  \institution{New York University}
  \city{New York}
  \state{NY}
  \country{USA}
}

\author{Yuefan Deng}
\authornotemark[2]
\affiliation{%
  \department{Department of Applied Mathematics and Statistics}
  \institution{Stony Brook University}
  \city{Stony Brook}
  \state{NY}
  \country{USA}
}

\renewcommand{\shortauthors}{Xu et al.}
\begin{abstract}
Graph self-supervised learning aims to learn transferable
representations from large-scale unlabeled graph data.
Joint-embedding predictive architectures (JEPAs) avoid explicit
negative-pair construction and raw-input reconstruction by predicting
masked targets directly in latent space. However, existing graph JEPAs
typically rely on a single predefined graph partition, biasing the
learned representations toward one structural granularity and limiting
their ability to capture complementary patterns at different graph
scales. To address this limitation, we propose HP-JEPA, a hierarchical
partitioning framework for multi-resolution graph joint-embedding
prediction. HP-JEPA organizes each graph into an ordered bank of
coarse-to-fine partition resolutions and performs context--target
latent prediction separately at each resolution using an online
encoder, an exponential-moving-average target encoder, and a latent
predictor. The resulting resolution-specific graph representations are
subsequently integrated through concatenation or task-specific
resolution weighting, allowing downstream models to combine
complementary local, regional, and global structural information.
Experiments on seven graph classification benchmarks and one graph
regression benchmark show that HP-JEPA outperforms the fixed-resolution
Graph-JEPA baseline on 6 of 8 tasks, improving upon Graph-JEPA on most
evaluated benchmarks. Size-stratified analyses further show that
HP-JEPA achieves higher accuracy than Graph-JEPA in most evaluated
graph-size quartiles on three representative datasets. These results
highlight the effectiveness of hierarchical multi-resolution
partitioning for transferable graph representation learning.
\end{abstract}
\keywords{%
  Graph representation learning,
  self-supervised learning,
  joint-embedding predictive architecture,
  graph neural networks
}
\maketitle

\vspace{-5pt}

\section{Introduction}
\label{sec:introduction}
\begin{figure*}[t]
    \centering
    \includegraphics[
        width=\textwidth,page = 1, 
    ]{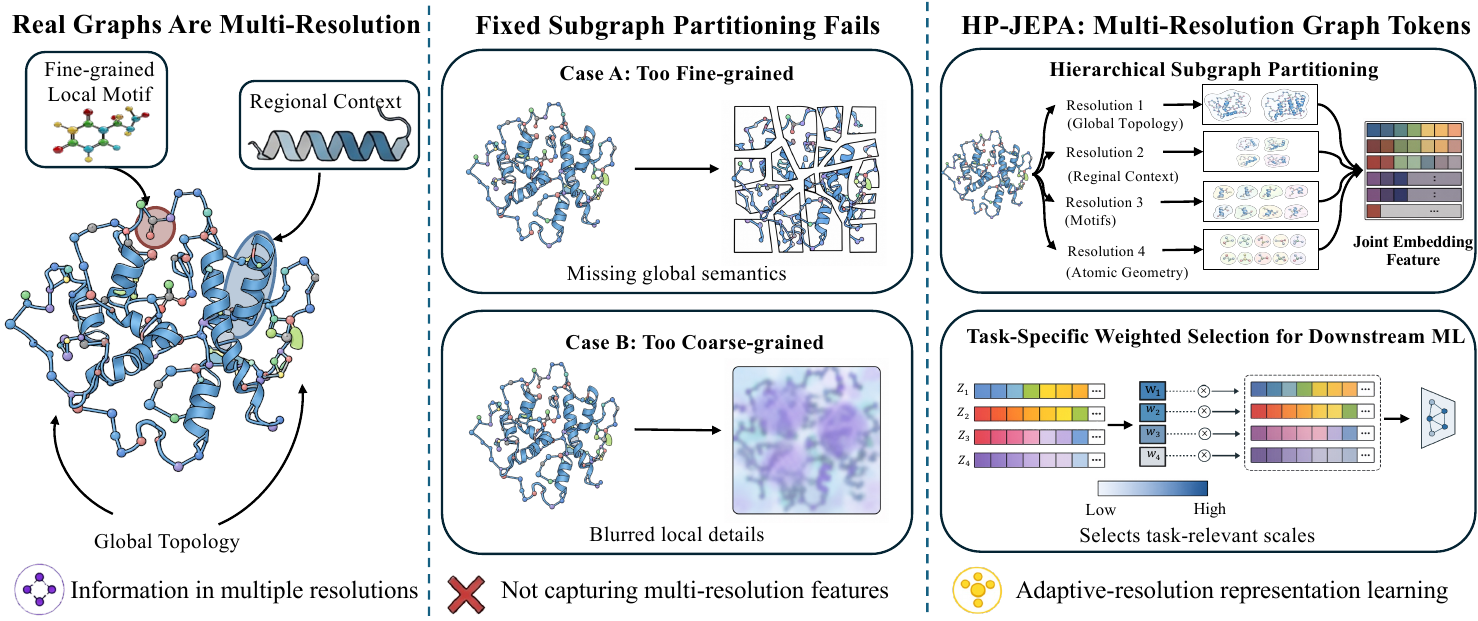}
    \vspace{-14pt}
    \caption{\textbf{Motivation and overview of HP-JEPA.}
    Real-world graphs contain complementary structural information at
    multiple resolutions, including fine-resolution local motifs,
    intermediate-resolution regional context, and coarse-resolution
    global topology.
    A single fixed resolution can be overly fine, fragmenting global
    context, or overly coarse, obscuring local structure.
    HP-JEPA constructs graph-region tokens at multiple resolutions and
    integrates their resolution-specific representations.
    Task-specific resolution weighting allows downstream models to
    emphasize the most informative resolutions. 
    }
    \label{fig:overview}
    \vspace{-10pt}
\end{figure*}

Graph representation learning aims to encode graph-structured data
into transferable representations that support downstream
classification and regression tasks~\cite{hamilton2017representation,wu2021comprehensive, xu2019powerful}.
Because task-specific annotations are often limited or expensive to
obtain, self-supervised learning has become an important paradigm for
learning from large collections of unlabeled graphs.
Existing graph self-supervised methods primarily construct their
training signals through contrastive learning
\cite{sun2019infograph,you2020graphcl,hassani2020contrastive,suresh2021adversarial},
generative architectures
\cite{hou2022graphmae,tan2023s2gae},
self-predictive~\cite{thakoor2021large,xie2022lagraph} or joint-embedding predictive architectures (JEPAs)
\cite{assran2023ijepa,skenderi2025graphjepa}.
Contrastive methods depend heavily on graph augmentations and, in many
cases, negative-pair construction~\cite{velickovic2019deep, qiu2020gcc, zhang2021canonical, huang2025does}, while generative architectures optimize
the reconstruction of raw node attributes or graph structures. Self-predictive methods try to predict the unobserved components of the graph, and supervise this process on original data space. JEPAs instead predict
held-out targets directly in representation space, avoiding both
explicit negative pairs and raw-input reconstruction~\cite{bardes2023v, lei2025mjepa}.
This formulation offers a promising route toward learning high-level structural and semantic dependencies in graphs.

Graph-JEPA~\cite{skenderi2025graphjepa} extends joint-embedding prediction to graph-level
representation learning by partitioning each graph into a collection
of regions or patches.
During pretraining, visible context regions are encoded by an online
encoder, while held-out target regions are represented by an
exponential-moving-average target encoder.
A predictor then estimates the latent target representations from the
visible context and target-specific structural queries.
This formulation successfully transfers JEPA-style learning to
graph-structured data without reconstructing raw node or edge
information.
However, the graph is tokenized using a single predefined partition
resolution throughout pretraining.
Consequently, the predictive objective is restricted to dependencies
expressed at one structural granularity, and the resulting
representation may become specialized to the particular resolution induced
by that partition.


The fixed-resolution assumption is restrictive because graph
semantics are inherently multi-granular~\cite{coarse2fine, Grain, dontwalk}.
Fine-grained partitions preserve localized motifs and short-range
interactions, but may fragment regional organization and long-range
structural dependencies.
Conversely, coarse partitions capture broader graph topology and
regional context, but can blur discriminative local patterns.
The most informative granularity may also vary across graphs of
different sizes and across downstream tasks: a resolution suitable for
identifying local biochemical motifs, for example, may not be optimal
for characterizing global interaction patterns in social graphs.
Therefore, no single partition resolution can be expected to preserve
all task-relevant information or generalize uniformly across
heterogeneous graph resolutions.
These observations motivate graph representation learning over an
ordered hierarchy of partition resolutions rather than a single fixed
partition, as shown in Fig.~\ref{fig:overview}.

To address this limitation, we propose
\textbf{HP-JEPA}, a hierarchical partitioning framework for
multi-resolution graph joint-embedding prediction.
HP-JEPA represents each graph using an ordered bank of coarse-to-fine
partition resolutions.
Here, \emph{hierarchical} refers to the ordering of structural
granularities: partitions at different resolutions are constructed
from the full graph.
At each active resolution, graph regions are encoded as patch-content
tokens together with structural queries that describe their positions
and contexts in the original graph.
HP-JEPA then performs resolution-wise context--target prediction using
a shared online encoder, an exponential-moving-average target encoder,
and a latent predictor.
By applying the predictive objective separately across resolutions,
HP-JEPA exposes the representation learner to structural dependencies
ranging from fine local patterns to coarse graph-level organization,
while retaining the JEPA advantages of label-free latent prediction
without negative pairs or raw graph reconstruction.


After self-supervised pretraining, HP-JEPA pools the valid region-token representations within each resolution to obtain a set of resolution-specific graph embeddings. These embeddings can be concatenated to preserve information from all configured resolutions or combined through a smoothed, task-specific weighting mechanism that learns the relative importance of each resolution while keeping the pretrained encoder frozen. This design separates label-free multi-resolution representation learning from lightweight task-specific resolution selection. The PROTEINS case study further shows that HP-JEPA produces a visually more organized embedding with clearer separation across graph-size quartiles and class labels. Together, these results indicate that learning and integrating representations across partition resolutions provides a more flexible alternative to fixed-resolution graph joint-embedding prediction.


Our main contributions are summarized as follows:
\begin{itemize}
    \item
    We identify the fixed-resolution bias in existing graph
    joint-embedding predictive architectures and formulate
    multi-resolution graph tokenization as a key requirement for
    learning representations across heterogeneous graph granularities.
    \item
    We propose HP-JEPA, which combines an ordered hierarchy
    of graph partition resolutions, resolution-wise latent
    context--target prediction, and multi-resolution readout with
    optional task-specific resolution weighting.
    \item
    We evaluate HP-JEPA through downstream graph classification and
regression, a size-stratified representation case study, and
sensitivity analyses over graph-size quartiles and resolution-bank
configurations.
    HP-JEPA improves over Graph-JEPA on six of eight tasks and maintains
its advantage in most evaluated graph-size quartiles and
resolution-bank configurations.
 
\end{itemize}

\section{Related Work}
\label{sec:related-work}

\begin{table*}[!t]
\centering
\caption{Design-level comparison of graph representation learning methods. The check marks (\cmark) and cross marks (\xmark) indicate that the property is explicitly satisfied/unsatisfied by design, and a triangle (\pmark) indicates a partial match.}
\vspace{-4pt}
\label{tab:related_work_comparison}
\scriptsize
\setlength{\tabcolsep}{3.5pt}
\renewcommand{\arraystretch}{1.12}
\resizebox{\textwidth}{!}{%
\begin{tabular}{@{}llcccccc@{}}
\toprule
Method
& Main training signal
& Label-Free
& No Neg.
& No Raw Recon.
& Latent Pred.
& Region Tokens
& Multi-Resolution Tokens \\
\midrule
F-GIN~\cite{xu2019powerful}
& Supervised labels
& \xmark
& \cmark
& \cmark
& \xmark
& \xmark
& \xmark \\

InfoGraph~\cite{sun2019infograph}
& Local--global mutual information
& \cmark
& \xmark
& \cmark
& \xmark
& \pmark
& \xmark \\

GraphCL~\cite{you2020graphcl}
& Augmented-view contrast
& \cmark
& \xmark
& \cmark
& \xmark
& \xmark
& \xmark \\

MVGRL~\cite{hassani2020contrastive}
& Multi-view contrast
& \cmark
& \xmark
& \cmark
& \xmark
& \xmark
& \xmark \\

AD-GCL-FIX/OPT~\cite{suresh2021adversarial}
& Adversarial graph contrast
& \cmark
& \xmark
& \cmark
& \xmark
& \xmark
& \xmark \\

GraphMAE~\cite{hou2022graphmae}
& Masked feature reconstruction
& \cmark
& \cmark
& \xmark
& \xmark
& \xmark
& \xmark \\

S2GAE~\cite{tan2023s2gae}
& Masked structure reconstruction
& \cmark
& \cmark
& \xmark
& \xmark
& \xmark
& \xmark \\

BGRL~\cite{thakoor2021large}
& Bootstrap latent prediction
& \cmark
& \cmark
& \cmark
& \cmark
& \xmark
& \xmark \\

LaGraph~\cite{xie2022lagraph}
& Latent graph prediction
& \cmark
& \cmark
& \cmark
& \cmark
& \pmark
& \xmark \\

Graph-JEPA~\cite{skenderi2025graphjepa}
& Single-resolution patch prediction
& \cmark
& \cmark
& \cmark
& \cmark
& \cmark
& \xmark \\

\textbf{\methodname}
& Multi-resolution patch prediction
& \cmark
& \cmark
& \cmark
& \cmark
& \cmark
& \cmark \\
\bottomrule
\end{tabular}%
}
\vspace{0pt}
\begin{flushleft}
\footnotesize
\textbf{Notes.}
(1) ``No Neg.'' means the objective does not require explicit negative pairs.
(2) ``No Raw Recon.'' means the method does not reconstruct raw node attributes, edges, or graph structure.
(3) ``Latent Pred.'' means the target is a representation-space target.
(4) ``Region Tokens'' means graph regions, subgraphs, or patches are explicit learning units.
(5) ``Multi-Resolution Tokens'' means that graph-region tokens are explicitly constructed at multiple partition resolutions.
(6) For F-GIN, check marks under ``No Neg.'' and ``No Raw Recon.'' only indicate that these mechanisms are absent; F-GIN is still supervised.
AD-GCL-FIX and AD-GCL-OPT are merged because they share the same method design.
\end{flushleft}
\vspace{-4pt}
\end{table*}

Graph neural networks provide the standard encoder backbone for graph representation learning by aggregating information over graph neighborhoods. Representative message-passing models include GCN, GraphSAGE, MPNN, and GIN~\cite{kipf2017gcn,hamilton2017inductive,gilmer2017neural,xu2019powerful}. Based on these encoders, contrastive and mutual-information-based graph self-supervised methods learn representations by comparing graph views or local--global summaries. InfoGraph maximizes mutual information between graph-level and substructure representations~\cite{sun2019infograph}, GraphCL contrasts augmented graph views~\cite{you2020graphcl}, MVGRL contrasts first-order and diffusion views~\cite{hassani2020contrastive}, and AD-GCL learns adversarial graph augmentations~\cite{suresh2021adversarial}. These methods improve graph-level representation learning, but their supervision is mainly defined by view agreement or augmentation design rather than latent prediction over graph-region structures. \methodname addresses this gap by learning from context--target prediction in graph-region latent space instead of relying on a fixed contrastive view construction.

Masked graph autoencoding provides another line of graph self-supervised learning. GraphMAE masks node attributes and reconstructs masked features~\cite{hou2022graphmae}, while S2GAE masks graph structure and reconstructs missing edges~\cite{tan2023s2gae}. These methods avoid explicit negative pairs, but their objectives are still tied to reconstructing raw graph observations. Non-contrastive methods move closer to representation-space learning: BGRL predicts target representations from another augmented view without negative samples~\cite{thakoor2021large}, and LaGraph formulates self-supervised learning as latent graph prediction~\cite{xie2022lagraph}. However, these methods do not explicitly organize graph-region targets across a hierarchy of structural granularities. \methodname addresses this limitation by predicting latent targets without reconstructing node attributes or edges, while using graph regions at multiple partition resolutions as the learning units.

Graph-JEPA~\cite{skenderi2025graphjepa} is the closest prior method to our work because it applies JEPA-style context--target prediction to graph patches. It partitions a graph into subgraphs and predicts latent target-patch representations from context-patch representations, avoiding raw reconstruction. However, Graph-JEPA exposes only a single patch-token resolution to the predictive objective. This fixed-resolution design can miss dependencies that appear across local, mesoscopic, and coarse graph structures. \methodname extends Graph-JEPA by constructing graph tokens over an ordered bank of partition resolutions, performing latent prediction independently within each resolution, and integrating the resulting resolution-specific representations during downstream readout. Here, the coarse-to-fine resolution hierarchy refers to the ordering of structural granularities rather than nested or parent--child partitions. The difference between \methodname and existing methods is summarized in Table~\ref{tab:related_work_comparison}.
\section{Preliminaries}
\label{sec:preliminaries}

This section establishes the notation and background used by HP-JEPA. We first formulate graph representation learning for an attributed graph $G=(V,E,\mathbf X)$ and then review a single-resolution graph JEPA formulation. In this formulation, the graph is decomposed into a collection of regions, one context region and a set of target regions are sampled, and the target latent states are predicted using an online token encoder $f_\theta$, an exponential-moving-average (EMA) target token encoder $f_{\bar\theta}$, and a predictor $q_\psi$. Section~\ref{sec:method} extends this formulation to $L=L_{\mathcal D}$ independently constructed graph resolutions, performs prediction separately within each resolution, maps each target-token state to a two-coordinate latent target, and combines the resulting resolution-specific graph representations $\{h_G^{(\ell)}\}_{\ell=1}^{L}$.

\subsection{Graph Representation Learning}
\label{sec:prelim-graph-representation}

Let $G=(V,E,\mathbf X)$ denote an attributed graph, where $V$ and $E$ are the node and edge sets, $\mathbf X\in\mathbb R^{n\times d_x}$ is the node-feature matrix, and $n=|V|$. Edge attributes, when available, are treated as part of the graph input but omitted from the notation for simplicity. We consider a graph dataset $\mathcal D=\{G_i\}_{i=1}^{N_{\mathcal D}}$ sampled from an underlying graph distribution $p_{\mathcal G}$.

Graph representation learning aims to map each graph $G$ to a fixed-dimensional representation $h_G\in\mathbb R^d$ that summarizes its structural and semantic information. A typical graph encoder $\mathcal E_\vartheta$ first produces node representations $\{r_v\in\mathbb R^d:v\in V\}$ and then applies a permutation-invariant readout according to $\{r_v\}_{v\in V}:=\mathcal E_\vartheta(G)$ and $h_G:=\operatorname{Readout}(\{r_v:v\in V\})\in\mathbb R^d$. The readout may be instantiated using mean pooling, sum pooling, or another permutation-invariant aggregation operator, ensuring that $h_G$ is unaffected by node ordering and has a fixed dimensionality across graphs of different sizes.

For a downstream graph-level task with label $y_G$, a task-specific predictor $\pi_\phi$ produces $\widehat y_G:=\pi_\phi(h_G)$. Given a labeled downstream dataset $\mathcal D_{\mathrm{task}}=\{(G_i,y_i)\}_{i=1}^{N_{\mathrm{task}}}$, the predictor is trained by minimizing $\mathcal L_{\mathrm{task}}:=N_{\mathrm{task}}^{-1}\sum_{i=1}^{N_{\mathrm{task}}}\ell_{\mathrm{task}}(\pi_\phi(h_{G_i}),y_i)$, where $\ell_{\mathrm{task}}$ denotes a classification or regression loss. In self-supervised graph representation learning, the representation encoder is pretrained without downstream labels, after which the learned graph representations are evaluated using a lightweight task-specific predictor with the pretrained encoder either frozen or fine-tuned.
\begin{figure*}[h]
\centering
\includegraphics[width=\textwidth,page = 2]{Figure/RH-JEPA-2-method-updated.pdf}
\vspace{-16pt}
\caption{\textbf{Overview of HP-JEPA.} \textbf{(a)} The input graph is partitioned independently at $L$ configured resolutions. At resolution $\ell$, each core region $P_i^{(\ell)}$ defines a graph-patch support $\widetilde P_i^{(\ell)}$, while padded and inactive token slots are masked and ignored. \textbf{(b)} At each eligible resolution, the sampled context index $c_\ell(G)$ and target-index set $\mathcal T_\ell(G)$ determine the context and target-content inputs processed by the online token encoder $f_\theta$ and EMA target token encoder $f_{\bar\theta}$, respectively. For each $t\in\mathcal T_\ell(G)$, the predictor $q_\psi$ estimates $\widehat y_t^{(\ell)}$ from the context state $s_{c_\ell}^{(\ell)}$ and target structural query $p_t^{(\ell)}$, supervised by the two-coordinate target $y_{t,\mathrm{tgt}}^{(\ell)}$ using a coordinate-wise Smooth~$L_1$ loss. \textbf{(c)} During downstream readout, the frozen EMA target token encoder processes each active resolution independently, and its valid token representations $r_{G,i}^{(\ell)}$ are mean-pooled to obtain $h_G^{(\ell)}$. The resolution-specific representations are subsequently combined through concatenation $h_G^{\mathrm{concat}}$ or uniformly smoothed task-specific resolution weighting $h_G^{\mathrm{task}}$ for graph classification or regression.}
\label{fig:rh-jepa-overview}
\label{fig:method-overview}
\Description{HP-JEPA independently constructs graph regions at multiple resolutions, performs two-coordinate latent prediction separately within each eligible resolution, mean-pools valid token representations to obtain resolution-specific graph representations, and combines those representations through concatenation or task-specific resolution weighting.}
\vspace{-10pt}
\end{figure*}
\subsection{JEPA for Graphs}
\label{sec:prelim-graph-jepa}

Joint-embedding predictive architectures learn representations by predicting a target in latent space from a partially observed context rather than reconstructing the target in the original input space. Given a context input $x_{\mathrm{ctx}}$ and a target input $x_{\mathrm{tgt}}$, the online token encoder produces $s_{\mathrm{ctx}}:=f_\theta(x_{\mathrm{ctx}})\in\mathbb R^d$, while the EMA target token encoder produces $u_{\mathrm{tgt}}:=f_{\bar\theta}(\operatorname{sg}(x_{\mathrm{tgt}}))\in\mathbb R^d$, where $\operatorname{sg}(\cdot)$ denotes stop-gradient. A structural query $p_{\mathrm{tgt}}\in\mathbb R^d$ identifies the target whose representation should be predicted, and the predictor produces $\widehat u_{\mathrm{tgt}}:=q_\psi(s_{\mathrm{ctx}}+p_{\mathrm{tgt}})$. A generic JEPA objective is then written as $\mathcal L_{\mathrm{JEPA}}:=\ell_{\mathrm{pred}}(\widehat u_{\mathrm{tgt}},\operatorname{sg}(u_{\mathrm{tgt}}))$, where $\ell_{\mathrm{pred}}$ measures discrepancy in latent space. By defining supervision in the representation space, JEPA avoids explicit reconstruction of low-level input details and does not require negative samples.

To specialize this formulation to graph regions using the notation adopted in Section~\ref{sec:method}, consider a single predefined graph partition $\mathcal P(G):=\{P_i\}_{i=1}^{k(G)}$, where every $P_i$ is nonempty, $\bigcup_{i=1}^{k(G)}P_i=V$, and $P_i\cap P_j=\varnothing$ for $i\neq j$. Here, $k(G)$ denotes the number of valid regions in the partition. For each core region $P_i$, let $\widetilde P_i\supseteq P_i$ denote the support of its associated graph patch, and let $\rho_i$ denote a structural descriptor characterizing the position and structural context of that patch in the original graph.

A patch-token encoder $g_\eta$ maps the graph patch to a content token $z_i:=g_\eta(G[\widetilde P_i])\in\mathbb R^d$, while a structural-query encoder $e_\xi$ maps its descriptor to $p_i:=e_\xi(\rho_i)\in\mathbb R^d$. The context and downstream readout paths retain both patch content and structural information through $x_i^{\mathrm{ctx}}=x_i^{\mathrm{full}}:=z_i+p_i$, whereas the target path receives only the patch-content input $x_i^{\mathrm{tgt}}:=z_i$. Consequently, the target token encoder does not directly observe the target structural query supplied to the predictor.

Let $\mathcal I(G):=\{1,\ldots,k(G)\}$ denote the valid region-index set. A context index $c(G)$ is sampled from $\mathcal I(G)$, and a nonempty target-index set $\mathcal T(G)\subseteq\mathcal I(G)\setminus\{c(G)\}$ is sampled from the remaining regions; we write $M(G):=|\mathcal T(G)|$. Suppressing the graph argument when it is clear from context, the online token encoder produces $s_c:=f_\theta(x_c^{\mathrm{ctx}})$, while the EMA target token encoder produces $[u_t]_{t\in\mathcal T(G)}:=f_{\bar\theta}(\operatorname{sg}([x_t^{\mathrm{tgt}}]_{t\in\mathcal T(G)}))$. For each target index $t\in\mathcal T(G)$, the predictor estimates $\widehat u_t:=q_\psi(s_c+p_t)$, and the single-resolution graph JEPA objective is $\mathcal L_{\mathrm{Graph\text{-}JEPA}}(G):=M(G)^{-1}\sum_{t\in\mathcal T(G)}\ell_{\mathrm{pred}}(\widehat u_t,\operatorname{sg}(u_t))$.

Through this latent prediction objective, the single-resolution formulation learns statistical dependencies between a visible graph context and held-out target regions without reconstructing node attributes or edges. However, it represents the graph using only one predefined partition $\mathcal P(G)$ and therefore exposes the model to one structural granularity. HP-JEPA generalizes this formulation to $L$ independently constructed resolutions $\{\mathcal P^{(\ell)}(G)\}_{\ell=1}^{L}$ with configured capacities $K_\ell=2^\ell$ and graph-dependent valid-region counts $k_\ell(G)$. It replaces $c(G)$, $\mathcal T(G)$, and $M(G)$ with the resolution-specific quantities $c_\ell(G)$, $\mathcal T_\ell(G)$, and $M_\ell(G)$, and performs prediction independently within each eligible resolution.

Unlike the generic full-vector objective above, HP-JEPA derives a two-coordinate target from each EMA target-token state. Specifically, for $u_t^{(\ell)}\in\mathbb R^d$, it defines $m(u_t^{(\ell)}):=d^{-1}\mathbf 1_d^\top u_t^{(\ell)}$ and $y_{t,\mathrm{tgt}}^{(\ell)}:=\Phi(u_t^{(\ell)})=[\cosh(m(u_t^{(\ell)})),\sinh(m(u_t^{(\ell)}))]^\top\in\mathbb R^2$, while the predictor produces $\widehat y_t^{(\ell)}:=q_\psi(s_{c_\ell}^{(\ell)}+p_t^{(\ell)})$. After self-supervised pretraining, valid token representations are pooled separately within each resolution to obtain $\{h_G^{(\ell)}\}_{\ell=1}^{L}$, which are combined through concatenation or task-specific resolution weighting as described in Section~\ref{sec:method}.


\section{Method}
\label{sec:method}

We consider an attributed graph $G=(V,E,\mathbf X)$, where $V$ and $E$ are the node and edge sets, $\mathbf X\in\mathbb R^{n\times d_x}$ is the node-feature matrix, and $n=|V|$. Let $L:=L_{\mathcal D}$ denote the number of resolutions configured for dataset $\mathcal D$. Resolution $\ell\in\{1,\ldots,L\}$ has region capacity $K_\ell=2^\ell$ and, when active for $G$, contains $k_\ell(G)$ valid graph regions. HP-JEPA represents each valid region using a patch-content token and a structural query, and performs latent prediction independently at each resolution using an online token encoder, an exponential-moving-average (EMA) target token encoder, and a predictor. After pretraining, the frozen target branch produces one graph representation per resolution, and these representations are combined through concatenation or task-specific resolution weighting. Accordingly, HP-JEPA consists of three components: i) multi-resolution graph tokenization; ii) resolution-wise joint-embedding prediction; and iii) multi-resolution readout with task-specific resolution weighting.

\subsection{Multi-Resolution Graph Tokenization}
\label{sec:rh-tokenization}

As illustrated in Figure~\ref{fig:method-overview}(a), HP-JEPA constructs graph tokens at $L$ configured resolutions. A resolution is active for graph $G$ when $K_\ell\leq2n$; once $K_\ell>2n$, that resolution and all finer resolutions are inactive. At an active resolution $\ell$, we set $k_\ell(G):=\min\{K_\ell,n\}$ and partition $V$ into the nonempty core regions $\mathcal P^{(\ell)}(G):=\{P_i^{(\ell)}\}_{i=1}^{k_\ell(G)}$. These regions satisfy $\bigcup_{i=1}^{k_\ell(G)}P_i^{(\ell)}=V$ and $P_i^{(\ell)}\cap P_j^{(\ell)}=\varnothing$ for $i\neq j$. For an inactive resolution, we set $k_\ell(G)=0$. When $K_\ell>n$ at an active resolution, the remaining configured token slots are padded and excluded by a padding mask. Each active resolution is constructed independently from the full input graph, and no nesting or parent--child relation is assumed between $\mathcal P^{(\ell)}(G)$ and $\mathcal P^{(r)}(G)$ for $\ell\neq r$. The resolution capacities and partitioning procedure used in our experiments are specified in Appendix~\ref{sec:experimental-setup}.

For each core region $P_i^{(\ell)}$, let $\widetilde P_i^{(\ell)}\supseteq P_i^{(\ell)}$ denote the node support of its associated graph patch, and let $\rho_i^{(\ell)}$ denote a structural descriptor of the patch in the original graph. A patch-token encoder $g_\eta$ produces the content token $z_i^{(\ell)}:=g_\eta(G[\widetilde P_i^{(\ell)}])\in\mathbb R^d$, while a structural-query encoder $e_\xi$ produces $p_i^{(\ell)}:=e_\xi(\rho_i^{(\ell)})\in\mathbb R^d$. The content token summarizes the attributes and internal structure of the patch, whereas the structural query identifies its structural context. Both encoders are shared across all configured resolutions.

The context and downstream readout paths retain both patch content and structural information through $x_i^{\mathrm{ctx},(\ell)}=x_i^{\mathrm{full},(\ell)}:=z_i^{(\ell)}+p_i^{(\ell)}$, whereas the target path receives only the patch-content input $x_i^{\mathrm{tgt},(\ell)}:=z_i^{(\ell)}$. Consequently, the EMA target token encoder cannot directly observe the target structural query supplied to the predictor. At each resolution, the token encoder operates only on the corresponding valid token inputs, and padded slots are excluded by the appropriate mask. Tokens from different resolutions are not placed in a common sequence during either joint-embedding pretraining or downstream readout. Further implementation details of patch construction, structural descriptors, token encoders, and masking are provided in Appendix~\ref{sec:experimental-setup}.
\vspace{-6pt}
\subsection{Resolution-Wise Joint-Embedding Prediction}
\label{sec:rh-prediction}

As illustrated in Figure~\ref{fig:method-overview}(b), HP-JEPA performs joint-embedding prediction separately at every eligible resolution. For graph $G$ and resolution $\ell$ satisfying $k_\ell(G)\geq2$, let $\mathcal I_\ell(G):=\{1,\ldots,k_\ell(G)\}$ denote the valid region-index set. One context index $c_\ell(G)$ is sampled uniformly from $\mathcal I_\ell(G)$, and a nonempty target-index set $\mathcal T_\ell(G)\subseteq\mathcal I_\ell(G)\setminus\{c_\ell(G)\}$ is sampled without replacement. We denote the number of selected target tokens by $M_\ell(G):=|\mathcal T_\ell(G)|$. The target-set size and sampling rule are specified in Appendix~\ref{sec:experimental-setup}. For readability, we subsequently write $c_\ell=c_\ell(G)$ and $\mathcal T_\ell=\mathcal T_\ell(G)$ when the graph is clear from context.

The online token encoder produces the context state $s_{c_\ell}^{(\ell)}:=f_\theta(x_{c_\ell}^{\mathrm{ctx},(\ell)})\in\mathbb R^d$. For the target branch, define the target-input sequence $X_{\mathcal T}^{(\ell)}:=[x_t^{\mathrm{tgt},(\ell)}]_{t\in\mathcal T_\ell}$ and its encoded sequence $U_{\mathcal T}^{(\ell)}:=f_{\bar\theta}(\operatorname{sg}(X_{\mathcal T}^{(\ell)}))$, where $\operatorname{sg}(\cdot)$ denotes stop-gradient. We use $u_t^{(\ell)}\in\mathbb R^d$ to denote the output in $U_{\mathcal T}^{(\ell)}$ associated with target index $t$. Only $f_{\bar\theta}$ is maintained as an EMA copy of $f_\theta$. The target patch-content inputs are detached before entering $f_{\bar\theta}$, whereas the target structural queries are supplied only to the online predictor and never enter the EMA target token encoder.

For each $t\in\mathcal T_\ell$, the predictor combines the context state with the corresponding target structural query and produces $\widehat y_t^{(\ell)}:=q_\psi(s_{c_\ell}^{(\ell)}+p_t^{(\ell)})\in\mathbb R^2$. Let $\mathbf 1_d\in\mathbb R^d$ denote the all-ones vector, define $m(u):=d^{-1}\mathbf 1_d^\top u$, and define the coordinate map $\Phi(u):=[\cosh(m(u)),\sinh(m(u))]^\top$. The scalar target statistic is therefore $m_t^{(\ell)}:=m(u_t^{(\ell)})$, and the corresponding target coordinate is $y_{t,\mathrm{tgt}}^{(\ell)}:=\Phi(u_t^{(\ell)})\in\mathbb R^2$. With $J_{\mathrm L}:=\operatorname{diag}(-1,1)$, the target satisfies $(y_{t,\mathrm{tgt}}^{(\ell)})^\top J_{\mathrm L}y_{t,\mathrm{tgt}}^{(\ell)}=-1$ and $(y_{t,\mathrm{tgt}}^{(\ell)})_1>0$, and therefore lies on the positive branch of the one-dimensional Lorentz hyperboloid. The predictor output is trained to approximate this target in ambient coordinates but is not itself explicitly constrained to lie on the hyperboloid.

Let $\ell_{G,\ell,t}$ denote the coordinate-wise $\operatorname{SmoothL1}_{\beta}$ penalty between $\widehat y_t^{(\ell)}$ and $\operatorname{sg}(y_{t,\mathrm{tgt}}^{(\ell)})$, averaged over the two output coordinates. The resolution-specific graph loss is $\mathcal L_\ell(G):=M_\ell(G)^{-1}\sum_{t\in\mathcal T_\ell(G)}\ell_{G,\ell,t}$. Thus, the objective predicts a two-coordinate function of the feature mean of each EMA target-token state rather than reconstructing its complete $d$-dimensional representation. The value of $\beta$ is reported in Appendix~\ref{sec:experimental-setup}.

For a minibatch $\mathcal B$, let $\mathcal B_\ell:=\{G\in\mathcal B:k_\ell(G)\geq2\}$ denote the graphs eligible at resolution $\ell$. When $\mathcal B_\ell\neq\varnothing$, define the target-count normalizer $Z_\ell(\mathcal B):=\sum_{G\in\mathcal B_\ell}M_\ell(G)$ and the graph weight $w_{G,\ell}:=M_\ell(G)/Z_\ell(\mathcal B)$. The minibatch objective is then $\mathcal L_\ell(\mathcal B):=\sum_{G\in\mathcal B_\ell}w_{G,\ell}\mathcal L_\ell(G)$. This reduction averages uniformly over all selected target tokens in the minibatch, so a graph with more selected targets contributes proportionally more to the update. Although the context index is excluded from $\mathcal T_\ell(G)$, context and target patches may share nodes when the patch-construction operator expands core regions into overlapping supports. The held-out prediction unit is therefore the target token rather than a strictly node-disjoint subgraph.

HP-JEPA processes the configured resolutions sequentially. For each minibatch, the model visits $\ell=1,\ldots,L$, applies one optimizer update using $\mathcal L_\ell(\mathcal B)$ whenever $\mathcal B_\ell\neq\varnothing$, and then updates the EMA target token encoder as $\bar\theta^{[\nu+1]}:=\mu_\nu\bar\theta^{[\nu]}+(1-\mu_\nu)\theta^{[\nu+1]}$, where $\nu$ is the global optimizer-step index and $\mu_\nu\in[0,1)$ is the corresponding EMA momentum. Resolution-specific losses are not summed before a common optimizer update, and no cross-resolution, adjacent-resolution, or parent--child consistency term is included in the pretraining objective. The optimizer and EMA schedules are specified in Appendix~\ref{sec:experimental-setup}.

Appendix~\ref{app:proof-coordinate-geometry} shows that, with respect to Lorentz geodesic distance, $\Phi$ is an isometric encoding of the feature-mean projection $m(u)=d^{-1}\mathbf 1_d^\top u$. It therefore preserves differences in the supervised feature-mean component while remaining invariant to components in $\mathbf 1_d^\perp:=\{v\in\mathbb R^d:\mathbf 1_d^\top v=0\}$. The same analysis shows that the population optimum of coordinate-wise Smooth L1 is a conditional Huber location of the target coordinates given the context state and target structural query, rather than necessarily their conditional mean.
\vspace{-4pt}
\subsection{Multi-Resolution Readout and Task-Specific Resolution Weighting}
\label{sec:rh-readout-weighting}

As illustrated in Figure~\ref{fig:method-overview}(c), after self-supervised pretraining, the context--target sampling procedure and predictor $q_\psi$ are discarded, while $g_\eta$, $e_\xi$, and $f_{\bar\theta}$ are retained and frozen. For each active resolution $\ell$, define the full-token sequence $X_G^{\mathrm{full},(\ell)}:=[x_i^{\mathrm{full},(\ell)}]_{i=1}^{k_\ell(G)}$ and encode it independently as $R_G^{(\ell)}:=f_{\bar\theta}(X_G^{\mathrm{full},(\ell)})$. We use $r_{G,i}^{(\ell)}\in\mathbb R^d$ to denote the output in $R_G^{(\ell)}$ associated with region $i$. Padded positions are masked and excluded. Because each resolution is processed separately, no cross-resolution token interaction is introduced by the frozen encoder; different resolutions are combined only after resolution-wise pooling.

For an active resolution, the valid token representations are mean-pooled as $h_G^{(\ell)}:=k_\ell(G)^{-1}\sum_{i=1}^{k_\ell(G)}r_{G,i}^{(\ell)}\in\mathbb R^d$. For an inactive resolution with $k_\ell(G)=0$, we set $h_G^{(\ell)}:=\mathbf 0_d$, where $\mathbf 0_d\in\mathbb R^d$ is the zero vector. One downstream representation concatenates all configured resolution slots as $h_G^{\mathrm{concat}}:=h_G^{(1)}\Vert\cdots\Vert h_G^{(L)}\in\mathbb R^{Ld}$, where $\Vert$ denotes vector concatenation. This representation preserves the output associated with every active resolution while maintaining a fixed dimension.

Alternatively, HP-JEPA learns a task-specific mixture of the resolution representations. Let $b^{\mathrm{task}}\in\mathbb R^L$ be the resolution-logit vector and define the raw weights by $\widetilde\omega^{\mathrm{task}}:=\operatorname{softmax}(b^{\mathrm{task}})$. The uniformly smoothed weight for resolution $\ell$ is $\omega_\ell^{\mathrm{task}}:=(1-\lambda_{\mathrm{unif}})\widetilde\omega_\ell^{\mathrm{task}}+\lambda_{\mathrm{unif}}/L$, where $\lambda_{\mathrm{unif}}\in[0,1]$. These weights satisfy $\omega_\ell^{\mathrm{task}}>0$ and $\sum_{\ell=1}^{L}\omega_\ell^{\mathrm{task}}=1$. The resulting task-specific representation is $h_G^{\mathrm{task}}:=\sum_{\ell=1}^{L}\omega_\ell^{\mathrm{task}}h_G^{(\ell)}\in\mathbb R^d$.

The weight vector $\omega^{\mathrm{task}}$ is shared by all graphs within the same downstream task and experimental split, so the weighting is task-specific rather than graph-specific. The weights are defined over all $L$ configured resolution slots and are not renormalized over the graph-specific active subset. Consequently, $\lambda_{\mathrm{unif}}=1$ gives the uniform representation $h_G^{\mathrm{task}}:=L^{-1}\sum_{\ell=1}^{L}h_G^{(\ell)}$, where inactive slots contribute $\mathbf 0_d$, whereas $\lambda_{\mathrm{unif}}=0$ gives the unsmoothed learned mixture.

The concatenated representation $h_G^{\mathrm{concat}}$ is obtained without downstream labels. For $\lambda_{\mathrm{unif}}<1$, the task-specific representation uses downstream labels to estimate the relative contribution of each resolution after self-supervised pretraining, whereas $\lambda_{\mathrm{unif}}=1$ yields label-independent uniform averaging. The patch modules and EMA target token encoder remain frozen throughout downstream training. The procedure used to optimize $b^{\mathrm{task}}$, select its checkpoint, apply uniform smoothing, and fit the final downstream predictor is specified in Table~\ref{tab:downstream_probe_configuration}.

Conditioned on the frozen resolution-specific representations, Appendix~\ref{app:proof-resolution-generalization} shows that the closure of the unsmoothed softmax-parameterized weighting family
contains every fixed-resolution linear predictor.  Under the stated boundedness assumptions, empirical risk minimization over this family competes with the best fixed resolution selected in hindsight, with an additional estimation term of order $O(\sqrt{\log L/N})$ for $N$ labeled downstream examples. Uniform smoothing adds an approximation term that is at most linear in $\lambda_{\mathrm{unif}}$.\begin{table*}[t]
\centering
\caption{Results on graph classification and regression benchmarks. For reference only, we include the results of F-GIN, a pioneering end-to-end supervised GNN. SSL methods are grouped by pretraining paradigm in the following order: contrastive, generative, self-predictive, and JEPA. Baseline results are taken directly from prior work~\cite{hassani2020contrastive, tan2023s2gae, suresh2021adversarial, skenderi2025graphjepa} and "-" indicates that the corresponding results are not reported. The best SSL result is \underline{underlined}, and the second best is \uwave{wavy underlined}. For a more direct comparison, HP-JEPA results that outperform the Graph-JEPA baseline are highlighted with a \colorbox{red!20}{red cell background}.}
\vspace{-10pt}
\label{tab:main_results}
\small
\setlength{\tabcolsep}{4pt}
\renewcommand{\arraystretch}{1.08}
\resizebox{\textwidth}{!}{%
\begin{tabular}{@{}lcccccccc@{}}
\toprule
Model 
& PROTEINS $\uparrow$ 
& MUTAG $\uparrow$ 
& DD $\uparrow$ 
& REDDIT-B $\uparrow$ 
& REDDIT-M5 $\uparrow$ 
& IMDB-B $\uparrow$ 
& IMDB-M $\uparrow$ 
& ZINC-12K $\downarrow$ \\
\midrule
F-GIN (Supervised)
& $72.39 \pm 2.76$ 
& $90.41 \pm 4.61$ 
& $74.87 \pm 3.56$ 
& $86.79 \pm 2.04$ 
& $53.28 \pm 3.17$ 
& $71.83 \pm 1.93$ 
& $48.46 \pm 2.31$ 
& $0.254 \pm 0.005$ \\
\midrule
InfoGraph 
& $72.57 \pm 0.65$ 
& $87.71 \pm 1.77$ 
& $75.23 \pm 0.39$ 
& $78.79 \pm 2.14$ 
& $51.11 \pm 0.55$ 
& $71.11 \pm 0.88$ 
& $48.66 \pm 0.67$ 
& $0.890 \pm 0.017$ \\

GraphCL 
& $72.86 \pm 1.01$ 
& $88.29 \pm 1.31$ 
& $74.70 \pm 0.70$ 
& $82.63 \pm 0.99$ 
& $53.05 \pm 0.40$ 
& $70.80 \pm 0.77$ 
& $48.49 \pm 0.63$ 
& $0.627 \pm 0.013$ \\

MVGRL 
& -- 
& -- 
& -- 
& $84.5 \pm 0.6$ 
& -- 
& $74.2 \pm 0.7$ 
& $51.2 \pm 0.5$ 
& -- \\

AD-GCL-FIX 
& $73.59 \pm 0.65$ 
& $89.25 \pm 1.45$ 
& $74.49 \pm 0.52$ 
& $85.52 \pm 0.79$ 
& $53.00 \pm 0.82$ 
& $71.57 \pm 1.01$ 
& $49.04 \pm 0.53$ 
& $0.578 \pm 0.012$ \\

AD-GCL-OPT 
& $73.81 \pm 0.46$ 
& $89.70 \pm 1.03$ 
& $75.10 \pm 0.39$ 
& $85.52 \pm 0.79$ 
& $54.93 \pm 0.43$ 
& $72.33 \pm 0.56$ 
& $49.89 \pm 0.66$ 
& $0.544 \pm 0.004$ \\
\midrule
GraphMAE 
& $75.30 \pm 0.39$ 
& $88.19 \pm 1.26$ 
& $74.27 \pm 1.07$ 
& $88.01 \pm 0.19$ 
& $46.06 \pm 3.44$ 
& \uwave{$75.52 \pm 0.66$} 
& \uwave{$51.63 \pm 0.52$} 
& $0.935 \pm 0.034$ \\

S2GAE 
& \uwave{$76.37 \pm 0.43$} 
& $88.26 \pm 0.76$ 
& -- 
& $87.83 \pm 0.27$ 
& -- 
& \underline{$75.76 \pm 0.62$} 
& \underline{$51.79 \pm 0.36$} 
& -- \\
\midrule
BGRL 
& $70.99 \pm 3.86$ 
& $74.99 \pm 8.83$ 
& $71.52 \pm 2.97$ 
& $50 \pm 0$ 
& $20 \pm 0.1$ 
& $0.5 \pm 0$ 
& $0.33 \pm 0$ 
& $1.2 \pm 0.011$ \\

LaGraph 
& $75.2 \pm 0.4$ 
& $90.2 \pm 1.1$ 
& $78.1 \pm 0.4$ 
& $90.4 \pm 0.8$ 
& $56.4 \pm 0.4$ 
& $73.7 \pm 0.9$ 
& -- 
& -- \\
\midrule
Graph-JEPA 
& $75.67 \pm 3.78$ 
& \uwave{$91.25 \pm 5.75$} 
& \uwave{$78.64 \pm 2.35$} 
& \uwave{$91.99 \pm 1.59$} 
& \uwave{$56.73 \pm 1.96$} 
& $73.68 \pm 3.24$ 
& $50.69 \pm 2.91$ 
& \underline{$0.434 \pm 0.014$} \\

\textbf{HP-JEPA} 
& \cellcolor{red!20}\underline{$76.97 \pm 0.50$} 
& \cellcolor{red!20}\underline{$91.70 \pm 5.31$} 
& \cellcolor{red!20}\underline{$79.20 \pm 0.38$} 
& \cellcolor{red!20}\underline{$92.19 \pm 1.56$} 
& \cellcolor{red!20}\underline{$57.58 \pm 1.63$} 
& \cellcolor{red!20}$74.08 \pm 0.40$ 
& $50.51 \pm 1.81$ 
& \uwave{$0.472 \pm 0.010$} \\
\bottomrule
\end{tabular}%
}
\vspace{-10pt}
\end{table*}
\vspace{-8pt}
\section{Experiments}
\label{sec:experiments}

In this section, we conduct experiments to substantiate our claims and demonstrate the effectiveness of HP-JEPA. Specifically, Sec.~\ref{sec:exp_downstream} evaluates downstream classification and regression performance, Sec.~\ref{sec:case_study} presents a representation case study on PROTEINS, and Sec.~\ref{sec:sensitivity_analysis} analyzes graph-size and resolution-bank sensitivity.

\subsection{Downstream Task Evaluation}\label{sec:exp_downstream}

\subsubsection{Setup}
Following established graph self-supervised learning evaluation protocols~\cite{skenderi2025graphjepa, suresh2021adversarial, tan2023s2gae}, we evaluate HP-JEPA on seven graph-classification benchmarks from TUDataset~\cite{Morris+2020_tud}: PROTEINS, MUTAG, DD, REDDIT-BINARY, REDDIT-MULTI-5K, IMDB-BINARY, and IMDB-MULTI. We additionally consider the ZINC-12K molecular graph-regression benchmark following Graph-JEPA~\cite{skenderi2025graphjepa}. Additional dataset statistics, preprocessing details, and dataset-specific configurations are provided in Appendix~\ref{app:dataset-details}.

We compare HP-JEPA with the graph SSL baselines InfoGraph~\cite{sun2019infograph}, GraphCL~\cite{you2020graphcl}, MVGRL~\cite{hassani2020contrastive}, AD-GCL-FIX/OPT~\cite{suresh2021adversarial}, GraphMAE~\cite{hou2022graphmae}, S2GAE~\cite{tan2023s2gae}, BGRL~\cite{thakoor2021large}, LaGraph~\cite{xie2022lagraph}, and Graph-JEPA~\cite{skenderi2025graphjepa}. For graph classification, we perform stratified 10-fold cross-validation under five random seeds; accuracy is first averaged across the ten folds for each seed, after which the final mean and standard deviation are computed across the five seed-level results. For ZINC-12K, we use the standard 10,000/1,000/1,000 train/validation/test split and report the test mean absolute error (MAE) averaged over ten random seeds.

To construct the graph-level representation, all valid tokens at each resolution are passed through the trained EMA target encoder and aggregated via mean pooling, yielding \(L\) resolution-specific feature vectors \(\{h_G^{(\ell)}\}_{\ell=1}^{L}\), where \(h_G^{(\ell)}\in\mathbb{R}^{d}\). As a key innovation of HP-JEPA, these \(L\) feature vectors represent information from different resolutions by construction. These representations are subsequently fused through a learnable resolution-logit vector \(b^{\mathrm{task}}\in\mathbb{R}^{L}\), whose softmax-normalized entries \(\omega_\ell^{\mathrm{task}}\) learn the relative importance of each resolution for the downstream task. The resulting weighted representation \(h_G^{\mathrm{task}}=\sum_{\ell=1}^{L}\omega_\ell^{\mathrm{task}}h_G^{(\ell)}\) is evaluated using a linear model: \(L_2\)-regularized logistic regression for the classification tasks and ridge regression for the ZINC dataset.

\subsubsection{Results and Discussion} The results are presented in Table~\ref{tab:main_results}. \textbf{HP-JEPA outperforms the Graph-JEPA baseline across 6 out of 8 tasks, highlighting the importance of hierarchical partitioning in JEPA-style self-supervised learning for capturing multi-resolution graph structures.} In addition, HP-JEPA is the best-performing model overall, achieving the best performance on 5 out of 8 tasks and the second best on 1 task, while Graph-JEPA achieves the second-best performance on 4 out of 8 tasks and the best performance on 1 task. Both Graph-JEPA and HP-JEPA underperform generative methods (GraphMAE and S2GAE) on the two IMDB datasets. This is potentially because the IMDB datasets are relatively small (e.g. only 13 nodes on average for IMDB-M), with node features consisting only of one-hot degree encodings rather than rich semantic attributes. Under such settings, raw signal reconstruction objectives may provide a stronger training signal by directly modeling the graph structure, whereas predictive objectives such as JEPA have less informative contextual signals to exploit. Nevertheless, HP-JEPA still consistently outperforms the Graph-JEPA baseline on these datasets, demonstrating the benefit of hierarchical partitioning even in this challenging setting. \textbf{Overall, these results indicate that JEPA-style training is effective at learning useful graph representations and that our proposed hierarchical partitioning further enhances representation quality by capturing graph structures across multiple resolutions.}
\begin{figure}[h]
\centering
\includegraphics[width=0.45\textwidth]{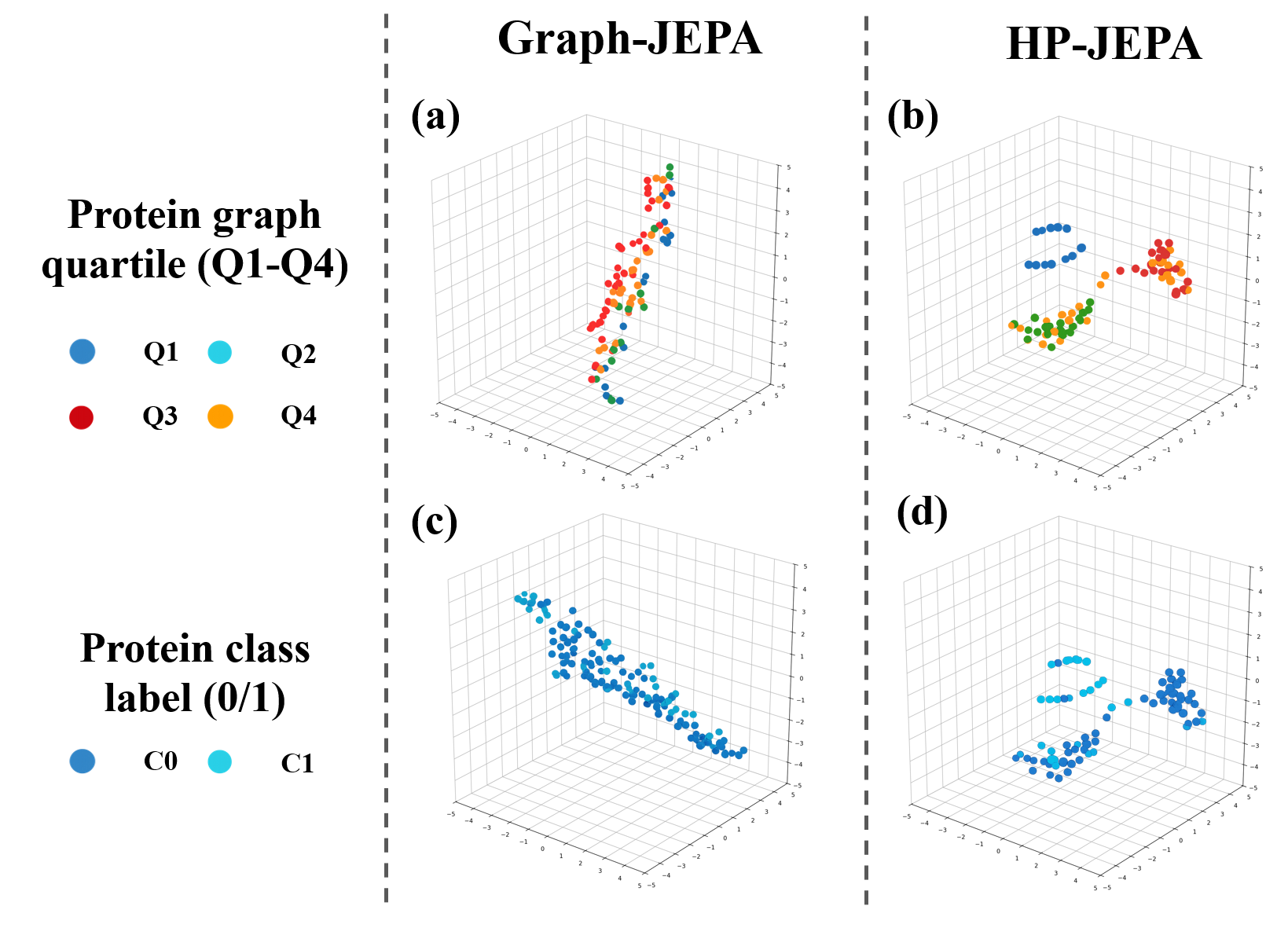}
\vspace{-10pt}
\caption{3D t-SNE visualization of graph-level embeddings on the PROTEINS dataset. Each point represents one protein graph. Columns compare Graph-JEPA and HP-JEPA; the top row colors embeddings by graph-size quartile (Q1–Q4), while the bottom row colors the same embeddings by protein class (C0/C1). Compared with fixed-resolution Graph-JEPA, HP-JEPA produces a more structured representation space, distinctly separating proteins. This alignment indicates that HP-JEPA captures graph-size-dependent structural and functional patterns without using class labels during pretraining. 
}
\vspace{-16pt}
\label{fig:case_study}
\end{figure}
\subsection{Case Study: Learning Multi-Resolution Graph Features}
\label{sec:case_study}

In this section, we examine whether HP-JEPA embeddings retain graph-size-related structural variation and how this variation aligns with downstream
class labels. For the graph classification task on the PROTEINS dataset, both local structural patterns (e.g., functional motifs) and higher-level protein organization (e.g. protein size/graph size and overall topologies) contribute to the prediction, making it a representative benchmark for evaluating multi-resolution representations. 

Specifically, the downstream task on the PROTEINS dataset is binary graph classification: determining whether a protein acts as an enzyme (a biochemical catalyst) or a non-enzyme (e.g., structural scaffold, signaling hormone, or toxin). Biologically, enzymatic catalysis necessitates deep 3D active site clefts, buried hydrophobic cores, and domain-level conformational flexibility, structural prerequisites that inherently require larger molecular scaffolds. In contrast, small micro-proteins lack the volumetric capacity to bury a hydrophobic core, operating instead through solvent-exposed, surface-dominated binding interfaces and localized structural constraints. Consequently, small proteins occupy a fundamentally distinct thermodynamic and functional regime. 

To evaluate whether learned embeddings capture this biophysical transition, we partition protein graphs into node-count quartiles ($Q1-Q4$). As shown in Fig.~\ref{fig:case_study}, standard Graph-JEPA exhibits scale-blindness, collapsing these distinct structural regimes into a single overlapping manifold. In contrast, HP-JEPA achieves topological disentanglement, isolating $Q1$ micro-proteins into a distinct, structured subspace. Critically, Fig.~\ref{fig:case_study} confirms that these $Q1$ embeddings correspond overwhelmingly to non-enzymatic proteins. This demonstrates that HP-JEPA naturally aligns its self-supervised representations with multi-resolution biological function without relying on class labels during pretraining.
\subsection{Sensitivity Analysis}
\label{sec:sensitivity_analysis}
\begin{figure}[t]
\centering
\includegraphics[width=0.48\textwidth ]{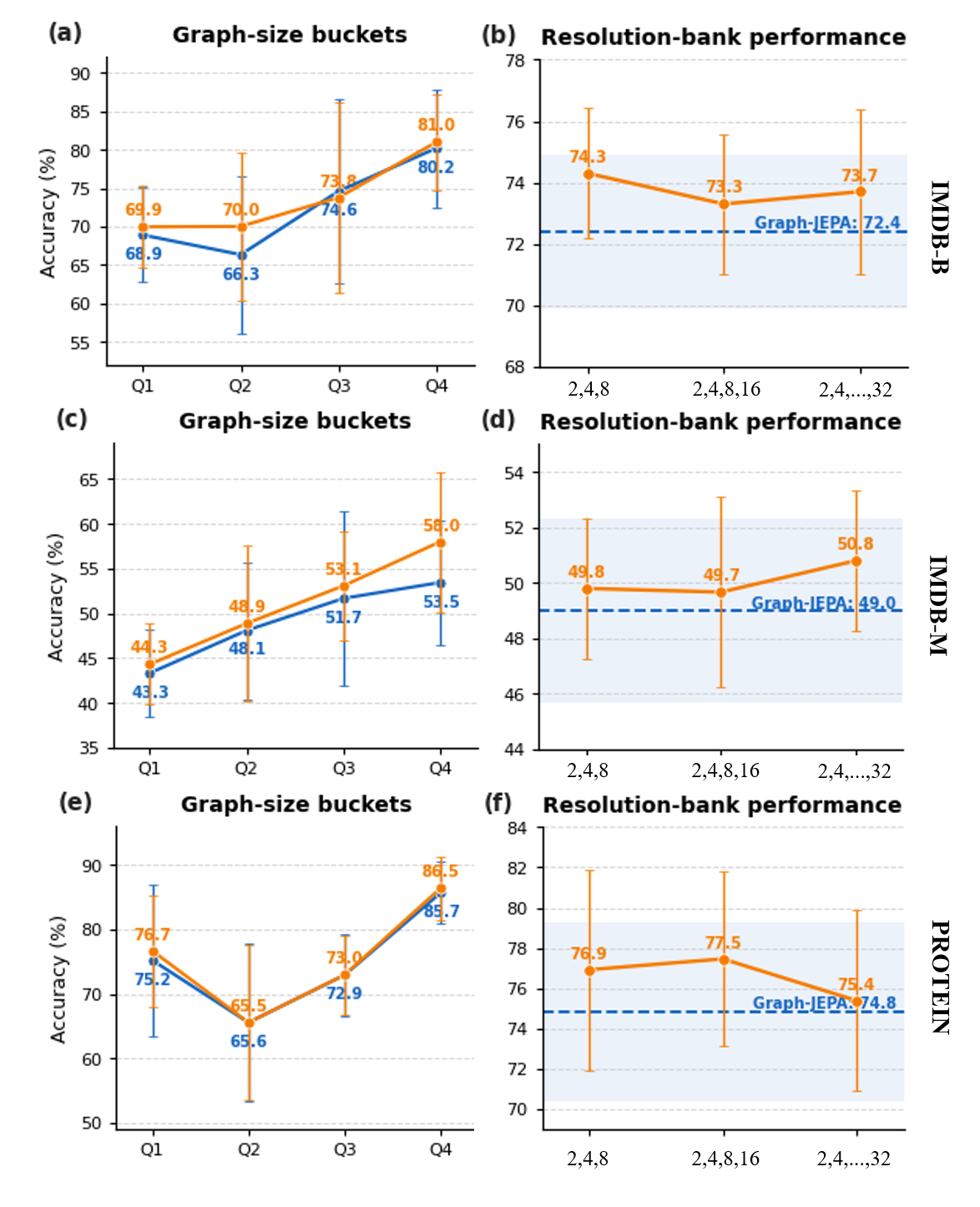}
\vspace{-20pt}
\caption{\textbf{Sensitivity analysis on IMDB-BINARY, IMDB-MULTI, and PROTEINS.}
Panels (a), (c), and (e) report Graph-JEPA and HP-JEPA accuracy across node-count quartiles $Q1$--$Q4$. Panels (b), (d), and (f) compare HP-JEPA using resolution banks $\{2,4,8\}$, $\{2,4,8,16\}$, and the full bank, with Graph-JEPA shown as a dashed reference. Results are the mean and standard deviation over ten cross-validation folds from one pretrained seed.}
\label{fig:sensitivity_analysis}
\vspace{-16pt}
\end{figure}
Figure~\ref{fig:sensitivity_analysis} evaluates the sensitivity of HP-JEPA to graph size and resolution-bank configuration on IMDB-BINARY, IMDB-MULTI, and PROTEINS. Across the graph-size analysis, HP-JEPA achieves higher mean accuracy than Graph-JEPA in most of quartiles. Similarly, HP-JEPA outperforms Graph-JEPA under all nine tested resolution-bank configurations. Although performance varies moderately across resolution banks, the improvement over the fixed-resolution Graph-JEPA remains consistent. By representing each graph at multiple structural granularities, HP-JEPA is less dependent on a single partition resolution that may be overly coarse for large graphs or overly fine for small graphs. Moreover, combining resolution-specific embeddings allows complementary local, regional, and global information to compensate when an individual resolution is suboptimal. In contrast, Graph-JEPA relies on one fixed partition resolution and is therefore more susceptible to mismatches between its predefined granularity and the graph-size distribution. Overall,  within these datasets and the evaluated configurations,
HP-JEPA retains an advantage over Graph-JEPA in most graph-size
quartiles and under all tested resolution banks.


\section{Conclusion}
\label{sec:conclusion}

JEPAs provide a latent predictive approach to graph self-supervised
learning without explicit negative pairs or raw-input reconstruction.
However, existing graph JEPAs rely on a single partition resolution,
restricting the predictive objective to one structural granularity.
We proposed HP-JEPA, which constructs an ordered bank of
coarse-to-fine partition resolutions, performs context--target
prediction independently within each resolution, and integrates the
resulting resolution-specific representations during downstream
readout. HP-JEPA improves over Graph-JEPA on six of the eight
evaluated tasks, including six of seven graph-classification
benchmarks. Size-stratified and resolution-bank analyses further show
that HP-JEPA retains its advantage in most evaluated graph-size
quartiles and across the tested partition banks. These results support
multi-resolution partitioning as a flexible extension of
fixed-resolution graph joint-embedding prediction.


\bibliographystyle{ACM-Reference-Format}
\bibliography{references}


\clearpage
\appendix

\section{Dataset and Training Details}
\label{sec:experimental-setup}
\label{app:training-configuration}
\label{app:dataset-details}

We evaluate HP-JEPA on seven graph-classification benchmarks from TUDataset~\cite{Morris+2020_tud}, including PROTEINS, MUTAG, DD, REDDIT-BINARY, REDDIT-MULTI-5K, IMDB-BINARY, and IMDB-MULTI, together with the ZINC-12K molecular graph-regression benchmark. These benchmarks cover biological, molecular, and social-network domains and exhibit substantial variation in graph size, structural complexity, and label cardinality. Their principal statistics are summarized in Table~\ref{tab:dataset_statistics}, following the dataset statistics reported by Graph-JEPA~\cite{skenderi2025graphjepa}.

\begin{table*}[t]
\centering
\caption{Descriptive statistics of the datasets used in the main experiments. The statistics are adapted from Graph-JEPA~\cite{skenderi2025graphjepa}. The symbol ``--'' indicates that ZINC-12K is a graph-regression benchmark and therefore has no discrete class labels.}
\label{tab:dataset_statistics}
\small
\setlength{\tabcolsep}{14pt}
\renewcommand{\arraystretch}{1.08}
\begin{tabular}{@{}lcccc@{}}
\toprule
Dataset
& Num. Graphs
& Num. Classes
& Avg. Nodes
& Avg. Edges \\
\midrule
PROTEINS
& 1,113
& 2
& 39.06
& 72.82 \\

MUTAG
& 188
& 2
& 17.93
& 19.79 \\

DD
& 1,178
& 2
& 284.32
& 715.66 \\

REDDIT-BINARY
& 2,000
& 2
& 429.63
& 497.75 \\

REDDIT-MULTI-5K
& 4,999
& 5
& 508.52
& 594.87 \\

IMDB-BINARY
& 1,000
& 2
& 19.77
& 96.53 \\

IMDB-MULTI
& 1,500
& 3
& 13.00
& 65.94 \\

ZINC-12K
& 12,000
& --
& 23.20
& 49.80 \\
\bottomrule
\end{tabular}
\end{table*}

For PROTEINS, MUTAG, and DD, we use the node attributes or discrete node labels provided by the corresponding datasets. MUTAG additionally contains categorical edge labels representing chemical bond types.  For ZINC-12K, categorical atom types and bond types are used as node and edge inputs, respectively. Graph-level labels are used only for downstream evaluation and are never provided to HP-JEPA during self-supervised pretraining.

For each graph-classification dataset, we perform stratified 10-fold cross-validation under five random seeds. Under each seed, the dataset is divided into 10 approximately class-balanced folds. For each fold, one fold is reserved exclusively for testing, while the remaining folds are used for self-supervised pretraining and downstream classifier fitting. HP-JEPA is pretrained independently for every fold without access to any graph in the corresponding test fold. After pretraining, graph representations are extracted using the frozen EMA target token encoder, and a linear classifier is fitted using only the labeled training representations. The classification accuracy is first averaged across the 10 folds for each seed, after which the mean and standard deviation are calculated across the five seed-level results.

For ZINC-12K, we use the standard fixed split containing 10,000 training graphs, 1,000 validation graphs, and 1,000 test graphs. The training set is used for self-supervised pretraining and downstream regressor fitting, the validation set is used for model and regularization selection, and the test set is reserved for final evaluation. We report the test mean absolute error (MAE) averaged over 10 random seeds, where a lower value indicates better performance.

\subsection{Pretraining Configuration}
\label{app:pretraining-configuration}

Table~\ref{tab:dataset_training_hyperparameters} reports the selected dataset-specific pretraining configurations. The configured region capacities follow the notation $K_\ell$ introduced in Section~\ref{sec:rh-tokenization}, while $M_\ell$ denotes the number of target regions sampled at the corresponding resolution. All configurations use a latent dimension of 512 and four Transformer blocks. The remaining architectural and optimization settings are selected separately for each benchmark.

\begin{table*}[t]
\centering
\caption{Dataset-specific HP-JEPA pretraining configurations. \(K_\ell\) lists the configured region capacities across resolutions, \(M_\ell\) lists the corresponding numbers of selected target regions, \(d_{\mathrm{RW}}\) is the random-walk positional-encoding dimension, \textsc{Drop} is module dropout, \textsc{Tok.} is token dropout, and \textsc{Clip} is the maximum gradient norm. All configurations use latent dimension 512 and four Transformer blocks.}
\label{tab:dataset_training_hyperparameters}
\scriptsize
\setlength{\tabcolsep}{2.5pt}
\renewcommand{\arraystretch}{1.08}
\resizebox{\textwidth}{!}{%
\begin{tabular}{@{}lccccccccccccc@{}}
\toprule
Dataset
& \(K_\ell\)
& \(M_\ell\)
& GNN Layers
& Heads
& \(d_{\mathrm{RW}}\)
& LR
& WD
& BS
& Epochs
& Scheduler
& Drop.
& Tok.
& Clip \\
\midrule
PROTEINS
& \(2,4,8,16\)
& \(1,2,3,4\)
& 2
& 8
& 15
& \(2.5{\times}10^{-4}\)
& \(1{\times}10^{-5}\)
& 128
& 20
& cosine
& 0
& 0.10
& 2.0 \\

MUTAG
& \(2,4,8,16,32\)
& \(1,2,3,4,4\)
& 2
& 4
& 15
& \(1.5{\times}10^{-4}\)
& 0
& 64
& 30
& cosine
& 0
& 0.05
& 1.0 \\

DD
& \(2,4,8,16,32\)
& \(1,2,3,4,4\)
& 3
& 8
& 30
& \(2{\times}10^{-4}\)
& 0
& 32
& 30
& cosine
& 0
& 0.05
& 1.0 \\

REDDIT-BINARY
& \(2,4,8,16,32,64,128\)
& \(1,2,3,4,4,4,4\)
& 2
& 8
& 40
& \(2{\times}10^{-5}\)
& 0
& 32
& 40
& cosine
& 0
& 0
& 0.5 \\

REDDIT-MULTI-5K
& \(2,4,8,16,32,64,128\)
& \(1,2,3,4,4,4,4\)
& 2
& 8
& 40
& \(2{\times}10^{-5}\)
& 0
& 32
& 40
& cosine
& 0
& 0
& 0.5 \\

IMDB-BINARY
& \(2,4,8\)
& \(1,2,3\)
& 2
& 8
& 15
& \(1{\times}10^{-5}\)
& \(1{\times}10^{-5}\)
& 16
& 12
& constant
& 0.05
& 0
& 2.0 \\

IMDB-MULTI
& \(2,4,8,16,32\)
& \(1,2,3,4,4\)
& 2
& 8
& 15
& \(1{\times}10^{-5}\)
& \(1{\times}10^{-5}\)
& 16
& 12
& cosine
& 0
& 0
& 1.0 \\

ZINC-12K
& \(2,4,8,16,32\)
& \(1,2,3,4,4\)
& 2
& 8
& 20
& \(2{\times}10^{-5}\)
& 0
& 32
& 40
& cosine
& 0
& 0
& 0.5 \\
\bottomrule
\end{tabular}%
}
\end{table*}

Following self-supervised pretraining, graph representations are extracted using the frozen patch-token encoder, structural-query encoder, and EMA target token encoder. At each active resolution $\ell$, valid token representations are mean-pooled to obtain $h_G^{(\ell)}$. The resolution-specific representations are then combined using the dataset-specific readout reported in Table~\ref{tab:downstream_probe_configuration}. For a uniformly smoothed task-specific readout, the coefficient $\lambda_{\mathrm{unif}}$ follows the definition in Section~\ref{sec:rh-readout-weighting}: $\lambda_{\mathrm{unif}}=1$ gives uniform averaging, whereas $\lambda_{\mathrm{unif}}=0.75$ gives a mixture containing $0.75$ uniform weight and $0.25$ learned weight. A linear classifier or regressor is fitted using only the labeled training representations, and its regularization coefficient is selected without using the corresponding test data.

\begin{table}[t]
\centering
\caption{Dataset-specific downstream readout and linear-probe settings. The readout column reports the mixture of uniform and learned resolution weights, and \(\alpha_{\mathrm{probe}}\) denotes the recorded regularization parameter of the downstream linear probe.}
\label{tab:downstream_probe_configuration}
\small
\setlength{\tabcolsep}{3.5pt}
\renewcommand{\arraystretch}{1.08}
\resizebox{\columnwidth}{!}{%
\begin{tabular}{@{}lccc@{}}
\toprule
Dataset
& Resolution Readout
& \(\alpha_{\mathrm{probe}}\)
& Evaluation \\
\midrule
PROTEINS
& \(0.75\) uniform \(+\;0.25\) learned
& \(10^{-2}\)
& \(5\times10\)-fold accuracy \\

MUTAG
& Uniform mean
& \(10^{-1}\)
& \(5\times10\)-fold accuracy \\

DD
& \(0.75\) uniform \(+\;0.25\) learned
& \(10^{-2}\)
& \(5\times10\)-fold accuracy \\

REDDIT-BINARY
& \(0.75\) uniform \(+\;0.25\) learned
& \(10^{-2}\)
& \(5\times10\)-fold accuracy \\

REDDIT-MULTI-5K
& \(0.75\) uniform \(+\;0.25\) learned
& \(10^{-2}\)
& \(5\times10\)-fold accuracy \\

IMDB-BINARY
& \(0.75\) uniform \(+\;0.25\) learned
& \(10^{-2}\)
& \(5\times10\)-fold accuracy \\

IMDB-MULTI
& Uniform mean
& \(10^{-3}\)
& \(5\times10\)-fold accuracy \\

ZINC-12K
& \(0.75\) uniform \(+\;0.25\) learned
& \(10^{-2}\)
& 10-seed MAE \\
\bottomrule
\end{tabular}%
}
\end{table}

\paragraph{Reproducibility and leakage prevention.} Dataset labels are used only when constructing stratified classification splits, estimating task-specific resolution weights, and fitting the downstream linear probe. They are not provided to HP-JEPA during self-supervised pretraining. For every classification fold, the held-out fold is excluded from self-supervised optimization, early stopping, resolution-weight estimation, and downstream probe fitting. For ZINC-12K, the test set is likewise excluded from pretraining, early stopping, readout selection, and probe-regularization selection.

\section{Geometry and Population Characterization of the Latent Target}
\label{app:RP-theory}
\label{app:proof-coordinate-geometry}

This section characterizes the geometric information retained by the latent-coordinate target and the population quantity recovered by the coordinate-wise Smooth L1 objective. The results concern the target construction itself and do not imply that the unconstrained predictor output necessarily lies on the Lorentz hyperboloid.

For $u\in\mathbb R^d$, define
\begin{equation}
m(u)
:=
\frac{1}{d}\mathbf 1_d^\top u,
\qquad
\Phi(u)
:=
\begin{bmatrix}
\cosh(m(u))\\
\sinh(m(u))
\end{bmatrix},
\label{eq:app-coordinate-map}
\end{equation}
where $\mathbf 1_d\in\mathbb R^d$ denotes the all-ones vector. Let
\begin{equation}
J_{\mathrm L}
:=
\begin{bmatrix}
-1 & 0\\
0 & 1
\end{bmatrix}
\label{eq:app-lorentz-matrix}
\end{equation}
and define the positive branch of the one-dimensional Lorentz hyperboloid as
\begin{equation}
\mathbb H_+^1
:=
\left\{
y\in\mathbb R^2:
y^\top J_{\mathrm L}y=-1,\;
y_1>0
\right\}.
\label{eq:app-positive-hyperboloid}
\end{equation}
For $y,y'\in\mathbb H_+^1$, their Lorentz geodesic distance is
\begin{equation}
d_{\mathbb H}(y,y')
:=
\operatorname{arcosh}
\left(
-y^\top J_{\mathrm L}y'
\right).
\label{eq:app-lorentz-distance}
\end{equation}

\begin{proposition}[Geometry and information content of the latent-coordinate map]
\label{prop:latent-coordinate-geometry}
For every $u,v\in\mathbb R^d$, the coordinate map in Equation~\eqref{eq:app-coordinate-map} satisfies
\begin{equation}
\Phi(u)\in\mathbb H_+^1.
\label{eq:app-phi-membership}
\end{equation}
Moreover,
\begin{equation}
d_{\mathbb H}
\left(
\Phi(u),\Phi(v)
\right)
=
|m(u)-m(v)|
=
\frac{1}{d}
\left|
\mathbf 1_d^\top(u-v)
\right|.
\label{eq:app-coordinate-isometry}
\end{equation}
Consequently,
\begin{equation}
d_{\mathbb H}
\left(
\Phi(u),\Phi(v)
\right)
\leq
\frac{1}{\sqrt d}
\|u-v\|_2.
\label{eq:app-coordinate-stability}
\end{equation}
Finally,
\begin{equation}
\Phi(u)=\Phi(v)
\quad\Longleftrightarrow\quad
u-v\in\mathbf 1_d^\perp,
\label{eq:app-coordinate-equivalence}
\end{equation}
where
\begin{equation}
\mathbf 1_d^\perp
:=
\left\{
a\in\mathbb R^d:
\mathbf 1_d^\top a=0
\right\}.
\label{eq:app-orthogonal-subspace}
\end{equation}
Therefore, $\Phi$ preserves exactly the feature-mean projection of the target-token state and is invariant to all latent components orthogonal to $\mathbf 1_d$.
\end{proposition}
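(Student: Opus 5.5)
The plan is to reduce everything to elementary identities for hyperbolic functions, working with the scalars $a:=m(u)$ and $b:=m(v)$ throughout.

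\emph{Step 1: membership in $\mathbb H_+^1$.} We compute
\[
\Phi(u)^\top J_{\mathrm L}\Phi(u)=-\cosh^2 a+\sinh^2 a=-1 .
\]
The first coordinate is $\cosh a\ge 1>0$. Together these give Equation~\eqref{eq:app-phi-membership}.

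\emph{Step 2: the distance formula.} We compute the Lorentz inner product and use the addition formula for $\cosh$:
\[
-\Phi(u)^\top J_{\mathrm L}\Phi(v)=\cosh a\cosh b-\sinh a\sinh b=\cosh(a-b).
\]
Hence
\[
d_{\mathbb H}(\Phi(u),\Phi(v))=\operatorname{arcosh}(\cosh(a-b)).
\]
This is the one step needing care. $\operatorname{arcosh}$ is the inverse of $\cosh$ restricted to $[0,\infty)$, with range $[0,\infty)$. Since $\cosh$ is even, $\cosh(a-b)=\cosh(|a-b|)$, so $\operatorname{arcosh}(\cosh(a-b))=|a-b|$. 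Linearity of $m$ then gives
\[
|a-b|=\tfrac1d\,\bigl|\mathbf 1_d^\top(u-v)\bigr|,
\]
which is Equation~\eqref{eq:app-coordinate-isometry}. The argument $\cosh(a-b)\ge1$ lies in the domain of $\operatorname{arcosh}$, so the expression is well defined.

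\emph{Step 3: the stability bound.} By Cauchy--Schwarz,
\[
\bigl|\mathbf 1_d^\top(u-v)\bigr|\le\|\mathbf 1_d\|_2\,\|u-v\|_2=\sqrt d\,\|u-v\|_2 .
\]
Dividing by $d$ yields Equation~\eqref{eq:app-coordinate-stability}.

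\emph{Step 4: the equivalence and the invariance claim.} $d_{\mathbb H}$ is a genuine metric on $\mathbb H_+^1$. More directly, $\Phi(u)=\Phi(v)$ forces $\sinh a=\sinh b$, hence $a=b$ because $\sinh$ is injective. Conversely, $a=b$ trivially gives $\Phi(u)=\Phi(v)$. So
\[
\Phi(u)=\Phi(v)\iff m(u)=m(v)\iff \mathbf 1_d^\top(u-v)=0\iff u-v\in\mathbf 1_d^\perp,
\]
which is Equation~\eqref{eq:app-coordinate-equivalence}.

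The concluding sentence follows from the orthogonal decomposition $u=m(u)\mathbf 1_d+u_\perp$ with $u_\perp\in\mathbf 1_d^\perp$. By Step 4, $\Phi$ depends on $u$ only through $m(u)$. By Step 2, $\Phi$ is an isometry from $(\mathbb R,|\cdot|)$ onto its image. No step is a real obstacle; the only subtlety is the absolute value arising from the branch of $\operatorname{arcosh}$ in Step 2.
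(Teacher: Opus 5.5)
Your proposal is correct and follows essentially the same route as the paper: the same hyperbolic identities for membership and distance, the same $\operatorname{arcosh}(\cosh r)=|r|$ step, and the same Cauchy--Schwarz bound. The only cosmetic difference is in the forward direction of the equivalence. You use injectivity of $\sinh$, while the paper deduces $m(u)=m(v)$ from the vanishing Lorentz distance via the isometry identity. Both steps are immediate.
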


\begin{proof}
Let
\begin{equation}
a:=m(u),
\qquad
b:=m(v).
\label{eq:app-proof-ab}
\end{equation}
Using the identity $\cosh^2(a)-\sinh^2(a)=1$, we obtain
\begin{align}
\Phi(u)^\top J_{\mathrm L}\Phi(u)
&=
-\cosh^2(a)+\sinh^2(a)
\nonumber\\
&=
-1.
\label{eq:app-proof-membership}
\end{align}
The first coordinate of $\Phi(u)$ is $\cosh(a)>0$. Hence, $\Phi(u)\in\mathbb H_+^1$.

Next, the Lorentz inner product between $\Phi(u)$ and $\Phi(v)$ satisfies
\begin{align}
-\Phi(u)^\top J_{\mathrm L}\Phi(v)
&=
\cosh(a)\cosh(b)
-
\sinh(a)\sinh(b)
\nonumber\\
&=
\cosh(a-b),
\label{eq:app-proof-inner-product}
\end{align}
where the second equality follows from the hyperbolic difference identity. Therefore,
\begin{align}
d_{\mathbb H}
\left(
\Phi(u),\Phi(v)
\right)
&=
\operatorname{arcosh}
\left(
\cosh(a-b)
\right)
\nonumber\\
&=
|a-b|.
\label{eq:app-proof-distance-ab}
\end{align}
The last equality follows from $\operatorname{arcosh}(\cosh r)=|r|$ for every $r\in\mathbb R$. Substituting the definition of $m$ yields
\begin{align}
d_{\mathbb H}
\left(
\Phi(u),\Phi(v)
\right)
&=
\left|
\frac{1}{d}
\mathbf 1_d^\top u
-
\frac{1}{d}
\mathbf 1_d^\top v
\right|
\nonumber\\
&=
\frac{1}{d}
\left|
\mathbf 1_d^\top(u-v)
\right|.
\label{eq:app-proof-distance-mean}
\end{align}

By the Cauchy--Schwarz inequality,
\begin{align}
\frac{1}{d}
\left|
\mathbf 1_d^\top(u-v)
\right|
&\leq
\frac{1}{d}
\|\mathbf 1_d\|_2
\|u-v\|_2
\nonumber\\
&=
\frac{1}{\sqrt d}
\|u-v\|_2,
\label{eq:app-proof-stability}
\end{align}
because $\|\mathbf 1_d\|_2=\sqrt d$. This proves Equation~\eqref{eq:app-coordinate-stability}.

It remains to characterize when two target-token states have the same coordinate representation. If $\Phi(u)=\Phi(v)$, then their Lorentz distance is zero. Equation~\eqref{eq:app-coordinate-isometry} therefore gives
\begin{equation}
|m(u)-m(v)|=0,
\label{eq:app-proof-equal-means}
\end{equation}
which is equivalent to
\begin{equation}
\mathbf 1_d^\top(u-v)=0.
\label{eq:app-proof-orthogonality}
\end{equation}
Hence, $u-v\in\mathbf 1_d^\perp$. Conversely, if $u-v\in\mathbf 1_d^\perp$, then $m(u)=m(v)$, and the definition of $\Phi$ directly gives $\Phi(u)=\Phi(v)$. This proves Equation~\eqref{eq:app-coordinate-equivalence}.
\end{proof}

We next characterize the population target of coordinate-wise Smooth L1 prediction. For a scalar residual $r\in\mathbb R$ and threshold $\beta>0$, define
\begin{equation}
\varphi_\beta(r)
:=
\begin{cases}
\dfrac{r^2}{2\beta},
&
|r|\leq\beta,
\\[6pt]
|r|-\dfrac{\beta}{2},
&
|r|>\beta.
\end{cases}
\label{eq:app-smooth-l1}
\end{equation}
Let
\begin{equation}
\mathsf X
:=
s_{c_\ell}^{(\ell)}
+
p_t^{(\ell)}
\label{eq:app-predictor-input}
\end{equation}
denote the predictor input and let
\begin{equation}
\mathsf Y
:=
\Phi
\left(
u_t^{(\ell)}
\right)
\in\mathbb R^2
\label{eq:app-random-target}
\end{equation}
denote the corresponding target coordinate. For coordinate $j\in\{1,2\}$ and a fixed input value $x$, define the conditional risk
\begin{equation}
Q_{j,x}(a)
:=
\mathbb E
\left[
\varphi_\beta
\left(
a-\mathsf Y_j
\right)
\,\middle|\,
\mathsf X=x
\right].
\label{eq:app-conditional-huber-risk}
\end{equation}

\begin{proposition}[Population target of coordinate-wise Smooth L1]
\label{prop:conditional-huber-target}
For $\mathbb P_{\mathsf X}$-almost every $x$, suppose that
\begin{equation}
\mathbb E
\left[
|\mathsf Y_j|
\,\middle|\,
\mathsf X=x
\right]
<\infty.
\label{eq:app-huber-moment}
\end{equation}
Then $Q_{j,x}$ is convex, coercive, and admits at least one minimizer. A scalar $a_{j,x}^\star$ minimizes $Q_{j,x}$ if and only if
\begin{equation}
\mathbb E
\left[
\operatorname{clip}
\left(
a_{j,x}^\star-\mathsf Y_j,
-\beta,
\beta
\right)
\,\middle|\,
\mathsf X=x
\right]
=
0.
\label{eq:app-huber-optimality}
\end{equation}
Thus, the population-optimal coordinate-wise Smooth L1 predictor returns a conditional Huber location functional of the target coordinate.

In particular, let
\begin{equation}
\mu_j(x)
:=
\mathbb E
\left[
\mathsf Y_j
\,\middle|\,
\mathsf X=x
\right].
\label{eq:app-conditional-mean}
\end{equation}
If
\begin{equation}
\Pr
\left(
|\mathsf Y_j-\mu_j(x)|
\leq
\beta
\,\middle|\,
\mathsf X=x
\right)
=
1,
\label{eq:app-quadratic-region-condition}
\end{equation}
then $\mu_j(x)$ is a minimizer of $Q_{j,x}$.
\end{proposition}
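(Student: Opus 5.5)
The plan is to work with the scalar function $\varphi_\beta$ directly and use its elementary properties. First, $\varphi_\beta$ is convex and continuously differentiable on $\mathbb R$, with derivative
\[
\varphi_\beta'(r)=\beta^{-1}\operatorname{clip}(r,-\beta,\beta),
\]
which is bounded by $1$ in absolute value. It also satisfies the two-sided bound $|r|-\beta/2\le\varphi_\beta(r)\le|r|$.

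\textbf{Finiteness, convexity, coercivity, existence.} From the upper bound and the moment condition \eqref{eq:app-huber-moment}, $Q_{j,x}(a)\le|a|+\mathbb E[|\mathsf Y_j|\mid\mathsf X=x]<\infty$ for every $a$. Convexity of $Q_{j,x}$ follows by taking conditional expectations of the pointwise convexity inequality. For coercivity, the lower bound gives
\[
Q_{j,x}(a)\ge|a|-\mathbb E[|\mathsf Y_j|\mid\mathsf X=x]-\beta/2\to\infty \quad\text{as } |a|\to\infty.
\]
A finite convex function on $\mathbb R$ is continuous, and a continuous coercive function attains its minimum, so a minimizer exists. I will work with a regular conditional distribution of $\mathsf Y_j$ given $\mathsf X=x$, so all statements hold for $\mathbb P_{\mathsf X}$-a.e.\ $x$.

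\textbf{Differentiation under the expectation.} This is the step needing the most care. By the mean value theorem, the difference quotients $(\varphi_\beta(a+h-y)-\varphi_\beta(a-y))/h$ are bounded by $\sup|\varphi_\beta'|\le1$. Dominated convergence (the constant $1$ is integrable) then gives
\[
Q_{j,x}'(a)=\beta^{-1}\,\mathbb E[\operatorname{clip}(a-\mathsf Y_j,-\beta,\beta)\mid\mathsf X=x].
\]
So $Q_{j,x}$ is differentiable everywhere, and no moment assumption is needed for this step.

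\textbf{Optimality condition.} For a differentiable convex function on $\mathbb R$, a point is a minimizer if and only if the derivative vanishes there. The "if" direction is the first-order convexity inequality $Q(b)\ge Q(a)+Q'(a)(b-a)$; the "only if" direction is Fermat's rule. Multiplying by $\beta$ yields \eqref{eq:app-huber-optimality}. This equation defines the Huber M-estimating equation, which justifies the phrase ``conditional Huber location functional.''

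\textbf{The special case.} Assume \eqref{eq:app-quadratic-region-condition}. Then, almost surely given $\mathsf X=x$, $|\mu_j(x)-\mathsf Y_j|\le\beta$, so the clip acts as the identity:
\[
\mathbb E[\operatorname{clip}(\mu_j(x)-\mathsf Y_j,-\beta,\beta)\mid\mathsf X=x]=\mu_j(x)-\mathbb E[\mathsf Y_j\mid\mathsf X=x]=0.
\]
Here $\mu_j(x)$ is finite by the moment assumption. By the characterization above, $\mu_j(x)$ is a minimizer of $Q_{j,x}$.
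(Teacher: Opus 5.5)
Your proof is correct and follows the paper's argument essentially step for step: the bound $|\varphi_\beta'(r)|=\beta^{-1}|\operatorname{clip}(r,-\beta,\beta)|\le 1$ justifies differentiating under the conditional expectation, convexity plus the lower bound $\varphi_\beta(r)\ge|r|-\beta/2$ gives coercivity and existence, the first-order condition for a differentiable convex function gives the clip characterization, and the clip being inactive at $\mu_j(x)$ handles the special case. Your extra details---finiteness of $Q_{j,x}$ via $\varphi_\beta(r)\le|r|$, continuity for attainment of the minimum, and the explicit dominated-convergence step---make rigorous points the paper leaves implicit.
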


\begin{proof}
The scalar Smooth L1 function is convex and continuously differentiable. Its derivative is
\begin{equation}
\varphi_\beta'(r)
=
\frac{1}{\beta}
\operatorname{clip}
\left(
r,-\beta,\beta
\right).
\label{eq:app-smooth-l1-derivative}
\end{equation}
In particular,
\begin{equation}
|\varphi_\beta'(r)|
\leq
1
\label{eq:app-smooth-l1-bounded-derivative}
\end{equation}
for every $r\in\mathbb R$. The bounded derivative permits differentiation under the conditional expectation, yielding
\begin{align}
Q_{j,x}'(a)
&=
\mathbb E
\left[
\varphi_\beta'
\left(
a-\mathsf Y_j
\right)
\,\middle|\,
\mathsf X=x
\right]
\nonumber\\
&=
\frac{1}{\beta}
\mathbb E
\left[
\operatorname{clip}
\left(
a-\mathsf Y_j,
-\beta,
\beta
\right)
\,\middle|\,
\mathsf X=x
\right].
\label{eq:app-huber-risk-derivative}
\end{align}

Because $\varphi_\beta$ is convex, its conditional expectation $Q_{j,x}$ is also convex. Moreover, for every $r\in\mathbb R$,
\begin{equation}
\varphi_\beta(r)
\geq
|r|-\frac{\beta}{2}.
\label{eq:app-huber-lower-bound}
\end{equation}
Therefore,
\begin{align}
Q_{j,x}(a)
&\geq
\mathbb E
\left[
|a-\mathsf Y_j|
\,\middle|\,
\mathsf X=x
\right]
-
\frac{\beta}{2}
\nonumber\\
&\geq
|a|
-
\mathbb E
\left[
|\mathsf Y_j|
\,\middle|\,
\mathsf X=x
\right]
-
\frac{\beta}{2}.
\label{eq:app-huber-coercivity}
\end{align}
Under Equation~\eqref{eq:app-huber-moment}, the right-hand side approaches infinity as $|a|\rightarrow\infty$. Hence, $Q_{j,x}$ is coercive and admits at least one minimizer.

Since $Q_{j,x}$ is convex and differentiable, a scalar $a_{j,x}^\star$ is a minimizer if and only if
\begin{equation}
Q_{j,x}'(a_{j,x}^\star)=0.
\label{eq:app-huber-zero-gradient}
\end{equation}
Substituting Equation~\eqref{eq:app-huber-risk-derivative} proves Equation~\eqref{eq:app-huber-optimality}.

Now suppose that Equation~\eqref{eq:app-quadratic-region-condition} holds. At $a=\mu_j(x)$, clipping is inactive almost surely conditional on $\mathsf X=x$, and therefore
\begin{equation}
\operatorname{clip}
\left(
\mu_j(x)-\mathsf Y_j,
-\beta,
\beta
\right)
=
\mu_j(x)-\mathsf Y_j.
\label{eq:app-huber-mean-no-clipping}
\end{equation}
It follows that
\begin{align}
\mathbb E
\left[
\operatorname{clip}
\left(
\mu_j(x)-\mathsf Y_j,
-\beta,
\beta
\right)
\,\middle|\,
\mathsf X=x
\right]
&=
\mu_j(x)
-
\mathbb E
\left[
\mathsf Y_j
\,\middle|\,
\mathsf X=x
\right]
\nonumber\\
&=
0.
\label{eq:app-huber-mean-optimality}
\end{align}
Equation~\eqref{eq:app-huber-optimality} then shows that $\mu_j(x)$ is a minimizer. Since the vector-valued Smooth L1 loss used by HP-JEPA averages the two scalar coordinate losses, its population minimization separates coordinate-wise.
\end{proof}

\section{Generalization of Task-Specific Resolution Weighting}
\label{app:proof-resolution-generalization}

This section analyzes the task-specific resolution-weighting family after self-supervised pretraining. We condition on the pretrained encoder and treat the resolution-specific representations as fixed feature maps. The analysis concerns the downstream hypothesis class and does not establish convergence of the nonlinear pretraining procedure or of a particular nonconvex optimization algorithm.

Let
\begin{equation}
S
=
\left\{
(G_i,Y_i)
\right\}_{i=1}^{N}
\label{eq:app-downstream-sample}
\end{equation}
be $N$ independent labeled downstream examples drawn from a distribution $\mathbb P$. For each resolution $\ell\in\{1,\ldots,L\}$, define the fixed-resolution linear predictor class
\begin{equation}
\mathcal F_\ell
:=
\left\{
f_{\ell,w}(G)
=
\left\langle
w,h_G^{(\ell)}
\right\rangle
:
\|w\|_2\leq B
\right\}.
\label{eq:app-fixed-resolution-class}
\end{equation}
Let
\begin{equation}
\Delta_L
:=
\left\{
\alpha\in\mathbb R_+^L:
\sum_{\ell=1}^{L}\alpha_\ell=1
\right\}
\label{eq:app-closed-simplex}
\end{equation}
denote the closed probability simplex. The resolution-weighted class is
\begin{equation}
\mathcal F_{\mathrm{mix}}
:=
\left\{
f_{w,\alpha}(G)
=
\left\langle
w,
\sum_{\ell=1}^{L}
\alpha_\ell h_G^{(\ell)}
\right\rangle
:
\|w\|_2\leq B,\;
\alpha\in\Delta_L
\right\}.
\label{eq:app-resolution-mixture-class}
\end{equation}

\begin{assumption}[Bounded downstream hypothesis class]
\label{ass:app-bounded-downstream}
There exist constants $R_h,B,\kappa,M>0$ such that the following conditions hold:
\begin{align}
\|h_G^{(\ell)}\|_2
&\leq
R_h
&&
\text{for every $G$ and $\ell$},
\label{eq:app-bounded-feature}
\\
\|w\|_2
&\leq
B,
\label{eq:app-bounded-head}
\\
0
\leq
\ell_{\mathrm{ds}}(\widehat y,y)
&\leq
M,
\label{eq:app-bounded-loss}
\\
\left|
\ell_{\mathrm{ds}}(a,y)
-
\ell_{\mathrm{ds}}(b,y)
\right|
&\leq
\kappa|a-b|
&&
\text{for every $a,b,y$}.
\label{eq:app-lipschitz-loss}
\end{align}
\end{assumption}

For a predictor $f$, define its population and empirical risks by
\begin{equation}
\mathcal R(f)
:=
\mathbb E_{(G,Y)\sim\mathbb P}
\left[
\ell_{\mathrm{ds}}
\left(
f(G),Y
\right)
\right]
\label{eq:app-population-risk}
\end{equation}
and
\begin{equation}
\widehat{\mathcal R}_S(f)
:=
\frac{1}{N}
\sum_{i=1}^{N}
\ell_{\mathrm{ds}}
\left(
f(G_i),Y_i
\right).
\label{eq:app-empirical-risk}
\end{equation}
Let $\widehat f_{\mathrm{mix}}\in\mathcal F_{\mathrm{mix}}$ be an $\varepsilon_{\mathrm{opt}}$-approximate empirical-risk minimizer satisfying
\begin{equation}
\widehat{\mathcal R}_S
\left(
\widehat f_{\mathrm{mix}}
\right)
\leq
\inf_{f\in\mathcal F_{\mathrm{mix}}}
\widehat{\mathcal R}_S(f)
+
\varepsilon_{\mathrm{opt}}.
\label{eq:app-approximate-erm}
\end{equation}

\begin{theorem}[Resolution-adaptive oracle bound]
\label{thm:resolution-adaptive-oracle}
Under Assumption~\ref{ass:app-bounded-downstream}, there exist universal constants $C_1,C_2>0$ such that, for every $\delta\in(0,1)$, with probability at least $1-\delta$,
\begin{align}
\mathcal R
\left(
\widehat f_{\mathrm{mix}}
\right)
\leq\;&
\min_{\ell\in\{1,\ldots,L\}}
\inf_{f\in\mathcal F_\ell}
\mathcal R(f)
\nonumber\\
&+
C_1\kappa BR_h
\sqrt{
\frac{1+\log L}{N}
}
+
C_2M
\sqrt{
\frac{\log(2/\delta)}{N}
}
+
\varepsilon_{\mathrm{opt}}.
\label{eq:app-resolution-oracle-bound}
\end{align}
Thus, the closed resolution-weighting family competes with the best fixed-resolution linear predictor selected in hindsight, while its additional dependence on the number of configured resolutions is logarithmic in $L$.
\end{theorem}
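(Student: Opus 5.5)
The proof follows the standard ERM oracle-inequality template: containment of the fixed-resolution classes, then uniform convergence controlled by Rademacher complexity, with the $\log L$ term coming from a sup over simplex vertices.

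\textbf{Containment.} For each $\ell$, take $\alpha=e_\ell\in\Delta_L$. Then $f_{w,e_\ell}=f_{\ell,w}$, so $\mathcal F_\ell\subseteq\mathcal F_{\mathrm{mix}}$ for every $\ell$. Hence $\inf_{\mathcal F_{\mathrm{mix}}}\mathcal R\le\min_\ell\inf_{\mathcal F_\ell}\mathcal R$. Fix $\eta>0$ and choose $f^\star\in\bigcup_\ell\mathcal F_\ell$ within $\eta$ of this minimum; we let $\eta\to0$ at the end.

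\textbf{Decomposition.} Write
\[
\mathcal R(\widehat f_{\mathrm{mix}})-\mathcal R(f^\star)\le\bigl[\mathcal R(\widehat f_{\mathrm{mix}})-\widehat{\mathcal R}_S(\widehat f_{\mathrm{mix}})\bigr]+\varepsilon_{\mathrm{opt}}+\bigl[\widehat{\mathcal R}_S(f^\star)-\mathcal R(f^\star)\bigr],
\]
using $\widehat{\mathcal R}_S(\widehat f_{\mathrm{mix}})\le\widehat{\mathcal R}_S(f^\star)+\varepsilon_{\mathrm{opt}}$.

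\textbf{First bracket.} Assumption~\ref{ass:app-bounded-downstream} gives a loss bounded in $[0,M]$, so McDiarmid plus symmetrization yield, with probability $1-\delta/2$,
\[
\sup_{f\in\mathcal F_{\mathrm{mix}}}\bigl(\mathcal R-\widehat{\mathcal R}_S\bigr)(f)\le2\,\mathfrak R_N(\ell_{\mathrm{ds}}\circ\mathcal F_{\mathrm{mix}})+M\sqrt{\log(2/\delta)/(2N)}.
\]
By Talagrand's contraction lemma, using the $\kappa$-Lipschitz property in the first argument (applied per $y$; the labels are fixed inside the conditional expectation), $\mathfrak R_N(\ell_{\mathrm{ds}}\circ\mathcal F_{\mathrm{mix}})\le\kappa\,\mathfrak R_N(\mathcal F_{\mathrm{mix}})$.

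\textbf{Second bracket.} Since $f^\star$ is fixed, Hoeffding gives $\widehat{\mathcal R}_S(f^\star)-\mathcal R(f^\star)\le M\sqrt{\log(2/\delta)/(2N)}$ with probability $1-\delta/2$. A union bound produces the $C_2M\sqrt{\log(2/\delta)/N}$ term. One subtlety: $f^\star$ depends on $\eta$ but not on $S$, so this is fine, and the final bound is continuous in $\eta$.

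\textbf{Main step: Rademacher complexity of $\mathcal F_{\mathrm{mix}}$.} For fixed $\sigma$, the map $(w,\alpha)\mapsto\sum_i\sigma_i\langle w,\sum_\ell\alpha_\ell h_{G_i}^{(\ell)}\rangle$ is linear in $\alpha$ once $w$ is optimized. Concretely,
\[
\sup_{w,\alpha}=\sup_{\alpha\in\Delta_L}B\Bigl\|\sum_\ell\alpha_\ell v_\ell\Bigr\|_2,\qquad v_\ell:=\sum_i\sigma_ih_{G_i}^{(\ell)}.
\]
The norm is convex, so its maximum over the simplex is attained at a vertex, giving $\sup=B\max_\ell\|v_\ell\|_2$. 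Hence
\[
\mathfrak R_N(\mathcal F_{\mathrm{mix}})\le\frac BN\,\mathbb E_\sigma\max_{\ell\le L}\|v_\ell\|_2 .
\]
This is where I expect the real work, in getting $\sqrt{1+\log L}$ rather than $\sqrt L$. Each $\|v_\ell\|_2$ has mean at most $R_h\sqrt N$ by Jensen. It is also a $R_h$-Lipschitz (in $\ell_2$ over $\sigma\in\{\pm1\}^N$, via the triangle inequality with $\|h\|\le R_h$) convex function of $\sigma$. Talagrand's convex-concentration inequality (or bounded differences with increments $2R_h$) then gives sub-Gaussian tails
\[
\Pr\bigl(\|v_\ell\|_2\ge R_h\sqrt N+t\bigr)\le e^{-ct^2/(R_h^2N)}.
\]
The maximal inequality for $L$ sub-Gaussian variables gives
\[
\mathbb E\max_\ell\|v_\ell\|_2\le R_h\sqrt N\bigl(1+C\sqrt{\log L}\bigr)\le C'R_h\sqrt{N(1+\log L)} .
\]
Therefore $\mathfrak R_N(\mathcal F_{\mathrm{mix}})\le C'BR_h\sqrt{(1+\log L)/N}$.

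\textbf{Conclusion.} Combining the pieces and letting $\eta\to0$ gives \eqref{eq:app-resolution-oracle-bound}. If the softmax parameterization only reaches the interior of $\Delta_L$, continuity of $\alpha\mapsto f_{w,\alpha}$ together with boundedness lets the closed family stand in without changing the infima.
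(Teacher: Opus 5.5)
Your proposal is correct and is essentially the paper's proof: your vertex reduction $\sup_{\alpha\in\Delta_L}B\|\sum_\ell\alpha_\ell v_\ell\|_2=B\max_\ell\|v_\ell\|_2$ is the paper's step $\widehat{\mathfrak R}_S(\mathcal F_{\mathrm{mix}})\le\widehat{\mathfrak R}_S(\operatorname{conv}(\bigcup_\ell\mathcal F_\ell))=\widehat{\mathfrak R}_S(\bigcup_\ell\mathcal F_\ell)$. After that you use the same bounded-differences sub-Gaussian maximal inequality over the $L$ variables $Z_\ell=(B/N)\|v_\ell\|_2$, then contraction and uniform convergence; your one-sided bound plus Hoeffding on a fixed comparator is only a cosmetic variant of the paper's two-sided ERM chain. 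One small slip: $\sigma\mapsto\|v_\ell\|_2$ is $R_h$-Lipschitz in $\ell_1$, not $\ell_2$ (its $\ell_2$ constant can be as large as $\sqrt N R_h$), but your bounded-differences route with increments $2R_h$ gives exactly the displayed tail, so nothing breaks.
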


\begin{proof}
For a scalar function class $\mathcal F$, define its empirical Rademacher complexity on the graph inputs in $S$ as
\begin{equation}
\widehat{\mathfrak R}_S(\mathcal F)
:=
\mathbb E_\sigma
\left[
\sup_{f\in\mathcal F}
\frac{1}{N}
\sum_{i=1}^{N}
\sigma_i f(G_i)
\right],
\label{eq:app-empirical-rademacher}
\end{equation}
where $\sigma_1,\ldots,\sigma_N$ are independent Rademacher random variables.

For a fixed resolution $\ell$, Euclidean norm duality gives
\begin{align}
\widehat{\mathfrak R}_S(\mathcal F_\ell)
&=
\mathbb E_\sigma
\left[
\sup_{\|w\|_2\leq B}
\frac{1}{N}
\sum_{i=1}^{N}
\sigma_i
\left\langle
w,h_{G_i}^{(\ell)}
\right\rangle
\right]
\nonumber\\
&=
\frac{B}{N}
\mathbb E_\sigma
\left\|
\sum_{i=1}^{N}
\sigma_i h_{G_i}^{(\ell)}
\right\|_2.
\label{eq:app-single-rad-duality}
\end{align}
By Jensen's inequality,
\begin{align}
\mathbb E_\sigma
\left\|
\sum_{i=1}^{N}
\sigma_i h_{G_i}^{(\ell)}
\right\|_2
&\leq
\left(
\mathbb E_\sigma
\left\|
\sum_{i=1}^{N}
\sigma_i h_{G_i}^{(\ell)}
\right\|_2^2
\right)^{1/2}.
\label{eq:app-single-rad-jensen}
\end{align}
Expanding the squared norm and using independence and zero mean of the Rademacher variables eliminates the cross terms, so
\begin{align}
\mathbb E_\sigma
\left\|
\sum_{i=1}^{N}
\sigma_i h_{G_i}^{(\ell)}
\right\|_2^2
&=
\sum_{i=1}^{N}
\left\|
h_{G_i}^{(\ell)}
\right\|_2^2
\nonumber\\
&\leq
NR_h^2.
\label{eq:app-single-rad-square}
\end{align}
Consequently,
\begin{equation}
\widehat{\mathfrak R}_S(\mathcal F_\ell)
\leq
\frac{BR_h}{\sqrt N}.
\label{eq:app-single-resolution-rad-bound}
\end{equation}

We next consider the union of the $L$ fixed-resolution classes. Define
\begin{equation}
Z_\ell(\sigma)
:=
\sup_{f\in\mathcal F_\ell}
\frac{1}{N}
\sum_{i=1}^{N}
\sigma_i f(G_i).
\label{eq:app-resolution-rad-variable}
\end{equation}
For every $f\in\mathcal F_\ell$, Assumption~\ref{ass:app-bounded-downstream} gives
\begin{equation}
|f(G_i)|
\leq
\|w\|_2
\|h_{G_i}^{(\ell)}\|_2
\leq
BR_h.
\label{eq:app-bounded-prediction}
\end{equation}
Changing one Rademacher sign changes $Z_\ell$ by at most $2BR_h/N$. The bounded-difference exponential-moment inequality therefore yields
\begin{equation}
\mathbb E_\sigma
\left[
\exp
\left(
\lambda
\left[
Z_\ell(\sigma)
-
\mathbb E_\sigma Z_\ell(\sigma)
\right]
\right)
\right]
\leq
\exp
\left(
\frac{\lambda^2B^2R_h^2}{2N}
\right)
\label{eq:app-resolution-subgaussian}
\end{equation}
for every $\lambda>0$. Applying the log-sum-exp inequality gives
\begin{align}
\mathbb E_\sigma
\left[
\max_{\ell\in\{1,\ldots,L\}}
Z_\ell(\sigma)
\right]
\leq\;&
\max_\ell
\mathbb E_\sigma
\left[
Z_\ell(\sigma)
\right]
+
\frac{\log L}{\lambda}
+
\frac{\lambda B^2R_h^2}{2N}.
\label{eq:app-union-log-sum-exp}
\end{align}
Optimizing the right-hand side over $\lambda$ gives
\begin{align}
\widehat{\mathfrak R}_S
\left(
\bigcup_{\ell=1}^{L}\mathcal F_\ell
\right)
&\leq
\max_\ell
\widehat{\mathfrak R}_S(\mathcal F_\ell)
+
BR_h
\sqrt{
\frac{2\log L}{N}
}
\nonumber\\
&\leq
BR_h
\left(
\frac{1}{\sqrt N}
+
\sqrt{
\frac{2\log L}{N}
}
\right).
\label{eq:app-union-rad-bound}
\end{align}

For every $w$ and $\alpha\in\Delta_L$,
\begin{align}
f_{w,\alpha}(G)
&=
\left\langle
w,
\sum_{\ell=1}^{L}
\alpha_\ell h_G^{(\ell)}
\right\rangle
\nonumber\\
&=
\sum_{\ell=1}^{L}
\alpha_\ell
\left\langle
w,h_G^{(\ell)}
\right\rangle.
\label{eq:app-mixture-convex-combination}
\end{align}
Thus,
\begin{equation}
\mathcal F_{\mathrm{mix}}
\subseteq
\operatorname{conv}
\left(
\bigcup_{\ell=1}^{L}\mathcal F_\ell
\right).
\label{eq:app-mixture-convex-hull}
\end{equation}
The supremum of a linear functional over a convex hull equals its supremum over the original set. It follows that
\begin{align}
\widehat{\mathfrak R}_S
\left(
\mathcal F_{\mathrm{mix}}
\right)
&\leq
\widehat{\mathfrak R}_S
\left(
\operatorname{conv}
\left(
\bigcup_{\ell=1}^{L}\mathcal F_\ell
\right)
\right)
\nonumber\\
&=
\widehat{\mathfrak R}_S
\left(
\bigcup_{\ell=1}^{L}\mathcal F_\ell
\right).
\label{eq:app-mixture-rad-convex-hull}
\end{align}
Combining Equations~\eqref{eq:app-union-rad-bound} and~\eqref{eq:app-mixture-rad-convex-hull}, and taking expectation over the sample, gives
\begin{equation}
\mathfrak R_N
\left(
\mathcal F_{\mathrm{mix}}
\right)
\leq
BR_h
\left(
\frac{1}{\sqrt N}
+
\sqrt{
\frac{2\log L}{N}
}
\right),
\label{eq:app-mixture-expected-rad}
\end{equation}
where $\mathfrak R_N$ denotes the expected Rademacher complexity.

By the contraction principle and the $\kappa$-Lipschitz property of $\ell_{\mathrm{ds}}$, the complexity of the induced loss class is bounded by $\kappa\mathfrak R_N(\mathcal F_{\mathrm{mix}})$. A standard Rademacher uniform-convergence bound for bounded losses therefore implies that there exists a universal constant $C_0>0$ such that, with probability at least $1-\delta$,
\begin{align}
\sup_{f\in\mathcal F_{\mathrm{mix}}}
\left|
\mathcal R(f)
-
\widehat{\mathcal R}_S(f)
\right|
\leq\;&
C_0\kappa BR_h
\left(
\frac{1}{\sqrt N}
+
\sqrt{
\frac{2\log L}{N}
}
\right)
\nonumber\\
&+
C_0M
\sqrt{
\frac{\log(2/\delta)}{N}
}.
\label{eq:app-uniform-convergence}
\end{align}
Denote the right-hand side by $\Gamma_N(\delta)$. On the event in Equation~\eqref{eq:app-uniform-convergence},
\begin{align}
\mathcal R
\left(
\widehat f_{\mathrm{mix}}
\right)
&\leq
\widehat{\mathcal R}_S
\left(
\widehat f_{\mathrm{mix}}
\right)
+
\Gamma_N(\delta)
\nonumber\\
&\leq
\inf_{f\in\mathcal F_{\mathrm{mix}}}
\widehat{\mathcal R}_S(f)
+
\varepsilon_{\mathrm{opt}}
+
\Gamma_N(\delta)
\nonumber\\
&\leq
\inf_{f\in\mathcal F_{\mathrm{mix}}}
\mathcal R(f)
+
\varepsilon_{\mathrm{opt}}
+
2\Gamma_N(\delta).
\label{eq:app-erm-risk-chain}
\end{align}

For every resolution $\ell$, the standard basis vector $e_\ell$ belongs to the closed simplex $\Delta_L$. Choosing $\alpha=e_\ell$ gives
\begin{equation}
f_{w,e_\ell}(G)
=
\left\langle
w,h_G^{(\ell)}
\right\rangle.
\label{eq:app-vertex-fixed-resolution}
\end{equation}
Therefore,
\begin{equation}
\bigcup_{\ell=1}^{L}
\mathcal F_\ell
\subseteq
\mathcal F_{\mathrm{mix}},
\label{eq:app-fixed-in-mixture}
\end{equation}
and hence
\begin{equation}
\inf_{f\in\mathcal F_{\mathrm{mix}}}
\mathcal R(f)
\leq
\min_{\ell\in\{1,\ldots,L\}}
\inf_{f\in\mathcal F_\ell}
\mathcal R(f).
\label{eq:app-oracle-comparator}
\end{equation}
Substituting Equations~\eqref{eq:app-uniform-convergence} and~\eqref{eq:app-oracle-comparator} into Equation~\eqref{eq:app-erm-risk-chain}, and absorbing numerical factors into universal constants $C_1$ and $C_2$, proves Equation~\eqref{eq:app-resolution-oracle-bound}.
\end{proof}

\paragraph{Relation to the softmax parameterization.} A finite softmax logit vector produces a weight vector in the interior of $\Delta_L$, whereas a fixed-resolution predictor corresponds to a simplex vertex. Therefore, the exact class-containment statement in Theorem~\ref{thm:resolution-adaptive-oracle} applies to the closure of the softmax-parameterized family. Every simplex vertex can be approached arbitrarily closely by increasing one resolution logit relative to the remaining logits.

We next quantify the effect of the uniform smoothing used in the task-specific readout. Let
\begin{equation}
u_L
:=
\frac{1}{L}\mathbf 1_L
\label{eq:app-uniform-weight-vector}
\end{equation}
and, for $\alpha\in\Delta_L$, define
\begin{equation}
\alpha^{(\lambda)}
:=
(1-\lambda_{\mathrm{unif}})\alpha
+
\lambda_{\mathrm{unif}}u_L,
\qquad
\lambda_{\mathrm{unif}}\in[0,1].
\label{eq:app-smoothed-weight}
\end{equation}

\begin{lemma}[Stability under uniform resolution smoothing]
\label{lem:uniform-smoothing-stability}
For $\alpha\in\Delta_L$, define
\begin{equation}
h_G^\alpha
:=
\sum_{\ell=1}^{L}
\alpha_\ell h_G^{(\ell)}
\label{eq:app-unsmoothed-representation}
\end{equation}
and
\begin{equation}
h_G^{\alpha^{(\lambda)}}
:=
\sum_{\ell=1}^{L}
\alpha_\ell^{(\lambda)}
h_G^{(\ell)}.
\label{eq:app-smoothed-representation}
\end{equation}
Under Assumption~\ref{ass:app-bounded-downstream},
\begin{equation}
\left\|
h_G^{\alpha^{(\lambda)}}
-
h_G^\alpha
\right\|_2
\leq
2\lambda_{\mathrm{unif}}R_h.
\label{eq:app-smoothing-feature-bound}
\end{equation}
Consequently, for every $\|w\|_2\leq B$,
\begin{equation}
\left|
\left\langle
w,h_G^{\alpha^{(\lambda)}}
\right\rangle
-
\left\langle
w,h_G^\alpha
\right\rangle
\right|
\leq
2\lambda_{\mathrm{unif}}BR_h.
\label{eq:app-smoothing-prediction-bound}
\end{equation}
The corresponding population risks satisfy
\begin{equation}
\mathcal R
\left(
f_{w,\alpha^{(\lambda)}}
\right)
\leq
\mathcal R
\left(
f_{w,\alpha}
\right)
+
2\kappa\lambda_{\mathrm{unif}}BR_h.
\label{eq:app-smoothing-risk-bound}
\end{equation}
\end{lemma}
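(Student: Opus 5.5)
The plan is to compute the weight difference $\alpha^{(\lambda)}-\alpha$ explicitly and then propagate it through the linear map $\beta\mapsto\sum_\ell\beta_\ell h_G^{(\ell)}$, the inner product with $w$, and the Lipschitz loss, in that order. By the definition in Equation~\eqref{eq:app-smoothed-weight},
\begin{equation*}
\alpha^{(\lambda)}-\alpha=\lambda_{\mathrm{unif}}\,(u_L-\alpha),
\end{equation*}
so
\begin{equation*}
h_G^{\alpha^{(\lambda)}}-h_G^\alpha=\lambda_{\mathrm{unif}}\sum_{\ell=1}^{L}(u_{L,\ell}-\alpha_\ell)h_G^{(\ell)}.
\end{equation*}

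\textbf{Feature bound.} I will apply the triangle inequality together with the uniform feature bound~\eqref{eq:app-bounded-feature} of Assumption~\ref{ass:app-bounded-downstream}. This gives
\begin{equation*}
\bigl\|h_G^{\alpha^{(\lambda)}}-h_G^\alpha\bigr\|_2\le\lambda_{\mathrm{unif}}R_h\,\|u_L-\alpha\|_1 .
\end{equation*}
Since $u_L$ and $\alpha$ both lie in $\Delta_L$, we have $\|u_L-\alpha\|_1\le\|u_L\|_1+\|\alpha\|_1=2$, which yields~\eqref{eq:app-smoothing-feature-bound}. Inactive resolution slots need no special treatment: they equal $\mathbf 0_d$ and satisfy the norm bound trivially.

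\textbf{Prediction bound.} Cauchy--Schwarz with $\|w\|_2\le B$ turns the feature bound directly into~\eqref{eq:app-smoothing-prediction-bound}, since
\begin{equation*}
\bigl|\langle w,h_G^{\alpha^{(\lambda)}}\rangle-\langle w,h_G^\alpha\rangle\bigr|\le\|w\|_2\,\bigl\|h_G^{\alpha^{(\lambda)}}-h_G^\alpha\bigr\|_2 .
\end{equation*}

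\textbf{Risk bound.} By the Lipschitz condition~\eqref{eq:app-lipschitz-loss}, for every pair $(G,Y)$,
\begin{equation*}
\ell_{\mathrm{ds}}(f_{w,\alpha^{(\lambda)}}(G),Y)\le\ell_{\mathrm{ds}}(f_{w,\alpha}(G),Y)+\kappa\cdot 2\lambda_{\mathrm{unif}}BR_h .
\end{equation*}
Taking the expectation over $\mathbb P$ gives~\eqref{eq:app-smoothing-risk-bound}. The expectations are finite because the loss is bounded by $M$.

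There is no real obstacle here. The only point needing care is the $\ell_1$-distance bound between two simplex points, and the factor $2$ absorbs it without requiring the sharper value $2(1-1/L)$.
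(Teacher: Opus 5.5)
Your proof is correct and follows the paper's argument. Both start from the identity $h_G^{\alpha^{(\lambda)}}-h_G^\alpha=\lambda_{\mathrm{unif}}\sum_{\ell}(u_{L,\ell}-\alpha_\ell)h_G^{(\ell)}$ and then apply the triangle inequality, Cauchy--Schwarz, and the Lipschitz property with an expectation. The only cosmetic difference is that the paper bounds the uniform average and the $\alpha$-mixture separately by $R_h$ each, while you bound the combined coefficients through $\|u_L-\alpha\|_1\le 2$, which also shows the slack relative to the sharper constant $2(1-1/L)$.
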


\begin{proof}
Using Equation~\eqref{eq:app-smoothed-weight},
\begin{align}
h_G^{\alpha^{(\lambda)}}
-
h_G^\alpha
&=
\lambda_{\mathrm{unif}}
\left(
\frac{1}{L}
\sum_{\ell=1}^{L}
h_G^{(\ell)}
-
\sum_{\ell=1}^{L}
\alpha_\ell h_G^{(\ell)}
\right).
\label{eq:app-smoothing-difference}
\end{align}
The triangle inequality and Assumption~\ref{ass:app-bounded-downstream} give
\begin{align}
\left\|
h_G^{\alpha^{(\lambda)}}
-
h_G^\alpha
\right\|_2
\leq\;&
\lambda_{\mathrm{unif}}
\left\|
\frac{1}{L}
\sum_{\ell=1}^{L}
h_G^{(\ell)}
\right\|_2
+
\lambda_{\mathrm{unif}}
\left\|
\sum_{\ell=1}^{L}
\alpha_\ell h_G^{(\ell)}
\right\|_2
\nonumber\\
\leq\;&
\lambda_{\mathrm{unif}}
\frac{1}{L}
\sum_{\ell=1}^{L}
\left\|
h_G^{(\ell)}
\right\|_2
+
\lambda_{\mathrm{unif}}
\sum_{\ell=1}^{L}
\alpha_\ell
\left\|
h_G^{(\ell)}
\right\|_2
\nonumber\\
\leq\;&
2\lambda_{\mathrm{unif}}R_h.
\label{eq:app-proof-smoothing-feature}
\end{align}
This proves Equation~\eqref{eq:app-smoothing-feature-bound}.

By the Cauchy--Schwarz inequality,
\begin{align}
\left|
\left\langle
w,
h_G^{\alpha^{(\lambda)}}
-
h_G^\alpha
\right\rangle
\right|
&\leq
\|w\|_2
\left\|
h_G^{\alpha^{(\lambda)}}
-
h_G^\alpha
\right\|_2
\nonumber\\
&\leq
2\lambda_{\mathrm{unif}}BR_h,
\label{eq:app-proof-smoothing-prediction}
\end{align}
which proves Equation~\eqref{eq:app-smoothing-prediction-bound}.

Finally, the $\kappa$-Lipschitz property of the downstream loss implies
\begin{align}
&
\left|
\ell_{\mathrm{ds}}
\left(
f_{w,\alpha^{(\lambda)}}(G),Y
\right)
-
\ell_{\mathrm{ds}}
\left(
f_{w,\alpha}(G),Y
\right)
\right|
\nonumber\\
&\hspace{35mm}
\leq
\kappa
\left|
f_{w,\alpha^{(\lambda)}}(G)
-
f_{w,\alpha}(G)
\right|
\nonumber\\
&\hspace{35mm}
\leq
2\kappa\lambda_{\mathrm{unif}}BR_h.
\label{eq:app-proof-smoothing-loss}
\end{align}
Taking expectation over $(G,Y)\sim\mathbb P$ proves Equation~\eqref{eq:app-smoothing-risk-bound}.
\end{proof}

Define the uniformly smoothed hypothesis class as
\begin{equation}
\mathcal F_{\mathrm{mix}}^{(\lambda)}
:=
\left\{
f_{w,\alpha^{(\lambda)}}:
\|w\|_2\leq B,\;
\alpha\in\Delta_L
\right\}.
\label{eq:app-smoothed-mixture-class}
\end{equation}
Let $\widehat f_{\mathrm{mix}}^{(\lambda)}$ be an $\varepsilon_{\mathrm{opt}}$-approximate empirical-risk minimizer over $\mathcal F_{\mathrm{mix}}^{(\lambda)}$.

\begin{corollary}[Oracle bound under uniform smoothing]
\label{cor:smoothed-resolution-oracle}
Under Assumption~\ref{ass:app-bounded-downstream}, there exist universal constants $C_1,C_2>0$ such that, for every $\delta\in(0,1)$, with probability at least $1-\delta$,
\begin{align}
\mathcal R
\left(
\widehat f_{\mathrm{mix}}^{(\lambda)}
\right)
\leq\;&
\min_{\ell\in\{1,\ldots,L\}}
\inf_{f\in\mathcal F_\ell}
\mathcal R(f)
+
2\kappa\lambda_{\mathrm{unif}}BR_h
\nonumber\\
&+
C_1\kappa BR_h
\sqrt{
\frac{1+\log L}{N}
}
+
C_2M
\sqrt{
\frac{\log(2/\delta)}{N}
}
+
\varepsilon_{\mathrm{opt}}.
\label{eq:app-smoothed-oracle-bound}
\end{align}
Therefore, uniform smoothing introduces an additional approximation term that is at most linear in $\lambda_{\mathrm{unif}}$.
\end{corollary}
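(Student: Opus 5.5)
The plan is to rerun the argument of Theorem~\ref{thm:resolution-adaptive-oracle} for the smoothed class $\mathcal F_{\mathrm{mix}}^{(\lambda)}$, and then to move from the smoothed comparator to the fixed-resolution comparator using Lemma~\ref{lem:uniform-smoothing-stability}.

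\textbf{Step 1: complexity of the smoothed class.} For every $\alpha\in\Delta_L$ the smoothed vector $\alpha^{(\lambda)}$ is again in $\Delta_L$, since it is a convex combination of $\alpha$ and $u_L$. Hence
\[
\mathcal F_{\mathrm{mix}}^{(\lambda)}\subseteq\mathcal F_{\mathrm{mix}}.
\]
Rademacher complexity is monotone under inclusion, so the bound \eqref{eq:app-mixture-expected-rad} applies verbatim to $\mathcal F_{\mathrm{mix}}^{(\lambda)}$. By the same contraction and bounded-loss argument, the uniform-convergence bound \eqref{eq:app-uniform-convergence} holds over $\mathcal F_{\mathrm{mix}}^{(\lambda)}$ with the same $\Gamma_N(\delta)$. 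Repeating the chain \eqref{eq:app-erm-risk-chain} for the $\varepsilon_{\mathrm{opt}}$-approximate ERM $\widehat f_{\mathrm{mix}}^{(\lambda)}$ then gives, on that event,
\[
\mathcal R\bigl(\widehat f_{\mathrm{mix}}^{(\lambda)}\bigr)\le \inf_{f\in\mathcal F_{\mathrm{mix}}^{(\lambda)}}\mathcal R(f)+\varepsilon_{\mathrm{opt}}+2\Gamma_N(\delta).
\]

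\textbf{Step 2: comparator bound.} This is the only genuinely new step. Fix $\ell$ and any $w$ with $\|w\|_2\le B$. The fixed-resolution predictor $f_{\ell,w}$ equals $f_{w,e_\ell}$. Its smoothed counterpart $f_{w,e_\ell^{(\lambda)}}$ lies in $\mathcal F_{\mathrm{mix}}^{(\lambda)}$. Applying \eqref{eq:app-smoothing-risk-bound} with $\alpha=e_\ell$ gives
\[
\mathcal R\bigl(f_{w,e_\ell^{(\lambda)}}\bigr)\le\mathcal R(f_{\ell,w})+2\kappa\lambda_{\mathrm{unif}}BR_h .
\]
Taking the infimum over $w$ and then the minimum over $\ell$ yields
\[
\inf_{f\in\mathcal F_{\mathrm{mix}}^{(\lambda)}}\mathcal R(f)\le\min_\ell\inf_{f\in\mathcal F_\ell}\mathcal R(f)+2\kappa\lambda_{\mathrm{unif}}BR_h .
\]

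\textbf{Step 3: combine.} Substitute the comparator bound into the chain from Step~1. The $2\Gamma_N(\delta)$ term is absorbed into $C_1\kappa BR_h\sqrt{(1+\log L)/N}+C_2M\sqrt{\log(2/\delta)/N}$ exactly as in Theorem~\ref{thm:resolution-adaptive-oracle}. This uses the elementary inequality
\[
\frac{1}{\sqrt N}+\sqrt{\frac{2\log L}{N}}\le 2\sqrt{\frac{1+\log L}{N}},
\]
and yields \eqref{eq:app-smoothed-oracle-bound}.

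The main point to check is the inclusion $\alpha^{(\lambda)}\in\Delta_L$ that underlies Step~1. It is immediate, since $\alpha^{(\lambda)}$ is nonnegative with entries summing to $(1-\lambda_{\mathrm{unif}})+\lambda_{\mathrm{unif}}=1$. With it in hand, no new complexity estimate is required.
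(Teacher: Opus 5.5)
Your proposal is correct and follows the paper's proof essentially step for step. Like the paper, you use the inclusion $\mathcal F_{\mathrm{mix}}^{(\lambda)}\subseteq\mathcal F_{\mathrm{mix}}$ to inherit the uniform-convergence event, apply Lemma~\ref{lem:uniform-smoothing-stability} at the vertex $\alpha=e_\ell$ to bound the smoothed comparator, and combine the two through the approximate-ERM chain. Your explicit inequality $1/\sqrt N+\sqrt{2\log L/N}\le 2\sqrt{(1+\log L)/N}$ simply makes precise the constant absorption that the paper leaves implicit.
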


\begin{proof}
Since $\alpha^{(\lambda)}\in\Delta_L$ whenever $\alpha\in\Delta_L$,
\begin{equation}
\mathcal F_{\mathrm{mix}}^{(\lambda)}
\subseteq
\mathcal F_{\mathrm{mix}}.
\label{eq:app-smoothed-class-subset}
\end{equation}
Consequently, the Rademacher-complexity and uniform-convergence bounds derived in the proof of Theorem~\ref{thm:resolution-adaptive-oracle} also apply to $\mathcal F_{\mathrm{mix}}^{(\lambda)}$.

Fix a resolution $\ell$ and a vector $w$ satisfying $\|w\|_2\leq B$. Let $e_\ell$ denote the $\ell$-th standard basis vector in $\mathbb R^L$. The corresponding unsmoothed representation is
\begin{equation}
h_G^{e_\ell}
=
h_G^{(\ell)}.
\label{eq:app-fixed-resolution-vertex}
\end{equation}
Applying Lemma~\ref{lem:uniform-smoothing-stability} with $\alpha=e_\ell$ gives
\begin{equation}
\mathcal R
\left(
f_{w,e_\ell^{(\lambda)}}
\right)
\leq
\mathcal R
\left(
f_{\ell,w}
\right)
+
2\kappa\lambda_{\mathrm{unif}}BR_h.
\label{eq:app-smoothed-fixed-comparator}
\end{equation}
Taking the infimum over $\|w\|_2\leq B$ and then the minimum over $\ell$ yields
\begin{align}
\inf_{f\in\mathcal F_{\mathrm{mix}}^{(\lambda)}}
\mathcal R(f)
\leq\;&
\min_{\ell\in\{1,\ldots,L\}}
\inf_{f\in\mathcal F_\ell}
\mathcal R(f)
+
2\kappa\lambda_{\mathrm{unif}}BR_h.
\label{eq:app-smoothed-comparator}
\end{align}
Applying the same approximate empirical-risk-minimization and uniform-convergence argument as in Equation~\eqref{eq:app-erm-risk-chain}, together with Equation~\eqref{eq:app-smoothed-comparator}, proves Equation~\eqref{eq:app-smoothed-oracle-bound}.
\end{proof}

\end{document}